\newtheorem{theorem}{Theorem}
\newtheorem{proposition}{Proposition}
\newtheorem{remark}{Remark}
\newcommand{\Tr}{\mathrm{Tr}}
\newcommand{\sys}{Eqs. \eqref{eq:fullargmin}-\eqref{eq:norm}\ }
\newcommand{\ppow}[1]{^{(#1)}}
\newcommand{\argmin}{\text{arg\,min}}
\title{Continuous Semi-Supervised Nonnegative Matrix Factorization}
\author{Michael R. Lindstrom \thanks{The University of Texas Rio Grande Valley, School of Mathematical and Statistical Sciences, mike.lindstrom@utrgv.edu} 
\and Xiaofu Ding \thanks{University of California Los Angeles Department of Mathematics}  \and Feng Liu \footnotemark[2] \and Anand Somayajula \footnotemark[2] \and Deanna Needell \footnotemark[2]}
\date{\today}
\begin{document}

\maketitle

\begin{abstract}
    Nonnegative matrix factorization can be used to automatically detect topics within a corpus in an unsupervised fashion. The technique amounts to an approximation of a nonnegative matrix as the product of two nonnegative matrices of lower rank. In this paper, we show this factorization can be combined with regression on a continuous response variable. In practice, the method performs better than regression done after topics are identified and retrains interpretability.
\end{abstract}

{\bf Keywords:} topic modelling, regression, nonnegative matrix factorization, optimization

\section{Introduction}

Nonnegative Matrix Factorization (NMF) is a highly versatile data science technique with far-reaching applications. It can identify thematic elements, i.e., groups of words that appear frequently together in a corpus, which together convey a common message. More generally, it can be used to decompose an image into identifiable patterns \cite{lee1999learning} and as a general-purpose dimensionality reduction or preprocessing method before applying other machine learning methods as has been done in studying various diseases \cite{lao2021regression,lai2013survival}. Like Singular Value Decomposition (SVD) \cite{stewart1993early}, NMF provides a low rank factorization. In NMF, a nonnegative matrix $X \in \mathbb{R}^{n \times m}_{\geq 0}$ representing a corpus (or other nonnegative dataset) is factored into a low rank approximation $X \approx WH$ where the inner dimension, $r$, between $W$ and $H$ is such that $r \ll m$ and $r \ll n$; however, unlike SVD, there is an additional constraint that both $W$ and $H$ are nonnegative, i.e., $W \in \mathbb{R}^{n \times r}_{\geq 0}$ and $H \in \mathbb{R}^{r \times m}_{\geq 0}.$ This non-negativity enforces that the data in $X$ is represented by a non-negative combination of the dictionary atoms in the factorization, which lends itself to human interpretability. For example, in the foundational work \cite{lee1999learning}, Lee and Seung show that NMF when applied to facial images decompose the images into recognizable parts such as noses, eyes, and mouths.

When applied to a document-term matrix $X$ where row $i$ of $X$ represents document $i$ and column $j$ represents the frequency of word $j$, the classical NMF method amounts to 
\begin{equation}
(W, H) = \argmin_{W \in \mathbb{R}^{n \times r}_{\geq 0}, H \in \mathbb{R}^{r \times m}_{\geq 0}} ||X - W H||_F^2 \label{eq:basic}
\end{equation}
where the $F$-subscript denotes the Frobenius norm with $||A||_F^2 = \Tr(A^T A) = \sum_{i=1}^n \sum_{j=1}^m |A_{ij}|^2$. Other variations on the penalty norm exist including the Kulblack-Liebler Divergence \cite{joyce2011kullback}. After computing $W$ and $H$, we interpret row $j$ of $H$ as the $j$th topic, its components being the weight of each word in that topic, and row $i$ of $W$ as the topic-encoding of document $j$, i.e.,
$$X_{i,:} \approx \sum_{j=1}^r W_{i,j} H_{j,:}.$$ Throughout this manuscript we make use of ``colon notation" where ``:" means the full range of indices for a row/column, ``a:b" indicates a consecutive range of indices from a to b, etc.

Prior authors have combined NMF with a linear regression procedure to maximize the predictive power of a classifier \cite{austin2018fully,zhu2018joint,asil21wsdm,nncpdvhn20}. This is accomplished through a penalty function that combines NMF with another objective function --- a (semi) supervised approach. Semi-supervised NMF can also be applied to guide NMF to identify topics with desired keywords \cite{gssnmf22}.

In this paper, we combine this NMF with a linear regression model to predict the value of a continuous response variable. We consider datasets that pair written commentary with a real-valued observation. As our motivating example, we consider \texttt{Rate My Professor} reviews \cite{rmp_data} that include all student comments for a professor along with the mean rating in $[1,5]$. Due to the averaging, the rating is effectively a continuous variable.

The rest of our paper is organized as follows: in section \ref{sec:formulate}, we provide the formulation of our method, its algorithmic implementation, and its theoretical properties; proofs of the properties are given in section \ref{sec:proofs}; in section \ref{sec:concept}, we provide a proof of concept through synthetic data; in section \ref{sec:rmp}, we test our method on the \texttt{Rate My Professor} dataset; and finally we conclude our work in section \ref{sec:conclusion}.

\section{Model}

\label{sec:formulate}

We provide the framework for our proposed Continuous Semi-Supervised Nonnegative Matrix Factorization method (CSSNMF).

\subsection{Formulation}

We consider having a document-term corpus $X \in \mathbb{R}_{\geq 0}^{n \times m}$ for $n$ documents with their associated word frequencies in the $m$ columns. Each document has a corresponding value in $\mathbb{R}$ so that we can associate with $X$ the vector $Y \in \mathbb{R}^n$. We choose $r \in \mathbb{N}$ and $\lambda \geq 0$ as hyper-parameters where $r$ denotes the number of topics and $\lambda$ is a regression weight. Given $W \in \mathbb{R}_{\geq 0}^{n \times r}$, $H \in \mathbb{R}_{\geq 0}^{r \times m}$, and $\theta \in \mathbb{R}^{r+1}$, we define the penalty function that combines topic modelling with a linear regression based on the topic representation.

\begin{align}
    F\ppow{\lambda}(W,H,\theta;X,Y) &= N(W,H;X) + \lambda R(W,\theta;Y), \text{ where } \label{eq:F} \\
    N(W,H;X) &:= ||X - WH||_F^2 \label{eq:N} \\
    R(W,\theta;Y) &:= ||\tilde W \theta - Y||^2 \label{eq:R}
\end{align}
and where $\bar W \in \mathbb{R}^{n \times (r+1)}$ is given by
\begin{equation}
    \bar W := 
    \begin{pmatrix} 1 & W_{1,1} & \hdots & W_{1,m} \\
    1 & W_{2,1} & \hdots & W_{2,m} \\
    \vdots & \vdots & \vdots & \vdots \\
    1 & W_{n,1} & \hdots & W_{n,m}
    \end{pmatrix}. \label{eq:Wtilde}
\end{equation}

The matrix $\bar W$ with its column of 1's allows for an intercept: given a topic representation $w \in \mathbb{R}^r$, we predict a value $\hat y = \theta_1 + \theta_2 w_1 + ... + \theta_{r+1} w_r.$

{\it When} $\lambda>0$, we seek
\begin{equation}
(W\ppow{\lambda}, H\ppow{\lambda}, \theta\ppow{\lambda}) = \argmin_{W,H,\theta} F\ppow{\lambda}(W,H,\theta;X,Y). \label{eq:fullargmin}
\end{equation}
And {\it when} $\lambda=0$, we define
\begin{align}
(W\ppow{0}, H\ppow{0}) &= \argmin_{W,H} N(W,H;X) \label{eq:WHargmin} \\
\theta\ppow{0} &= \argmin_\theta R(W\ppow{0},\theta;Y). \label{eq:thetaargmin}
\end{align}

We also impose a normalization constraint, that 
\begin{equation}
\forall i, \quad \sum_{j=1}^m H_{ij} = 1 \label{eq:norm}
\end{equation}
so that the topics have unit length in $\ell_1.$ If $X \approx WH$ is normalized so its rows sum to $1$ then it is also the case that $\forall i, \quad \sum_{j=1}^r W_{ij} \approx 1$ by noting that $X_{ij} = \sum_{k=1}^r W_{ik} H_{kj}$ and summing over $j$.

When $\lambda=0$, $\theta$ has no effect upon $F\ppow{\lambda}$ and we first perform regular NMF over $W$ and $H$ and, as a final step, we choose $\theta$ to minimize the regression error. In other words, if $\lambda=0$, we do NMF first and then find the best $\theta$ given the already determined weights for each document. It seems intuitive, however, that the regression could be improved if $\theta$ and $W$ both were being influenced by the regression to $Y$, which is what our method aims to do when $\lambda > 0$. From a practical perspective, if $\lambda \uparrow \infty$ then the regression error becomes dominant and we may expect the topics as found through $H$ to be less meaningful. In section \ref{ssec:theory}, we state some theoretical properties of our method as it is being trained.

Once $H\ppow{\lambda}$ and $\theta\ppow{\lambda}$ are known, we can make predictions for the response variable corresponding to a document. This amounts to finding the best nonnegative topic encoding $w \in \mathbb{R}^r$ for the document and using that encoding in the linear model --- see section \ref{ssec:algs}.

\begin{remark}[Uniqueness]
Using our established notation, we remark that if $X^* = WH$ and $Y^* = \theta_1 + W \theta_{2:(r+1)}$ then $X^* = \tilde W \tilde H$ and $Y^* = \theta_1 + \tilde W \tilde \theta$ where $\tilde W = SW$, $\tilde H = S^{-1} H$, and $\tilde \theta = S^{-1} \theta_{2:(r+1)}$ for any invertible $S \in \mathbb{R}^{r \times r}_{\geq 0}$ with $S^{-1} \in \mathbb{R}^{r \times r}_{\geq 0}$. Thus, uniqueness of an optima, if it exists, can only be unique up to matrix multiplications. \label{rmk:unique}
\end{remark}

\subsection{Theoretical Results}

\label{ssec:theory}

We present two important behaviours of CSSNMF with regards to increasing $\lambda$ and its effect upon predicting the response variable. The proofs are contained in section \ref{sec:proofs}.

\begin{proposition}[Regression Error with Nonzero $\lambda$]
For $\lambda \geq 0$, let $W^{(\lambda)}, H^{(\lambda)}, \theta^{(\lambda)}$ be a unique (as per Remark \ref{rmk:unique}) global minimum to \sys. Then \\ $R(W^{(\lambda)},\theta^{(\lambda)}) \leq R(W^{(0)},\theta^{(0})$. \label{prop:nonzero}
\end{proposition}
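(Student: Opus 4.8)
The plan is to exploit the defining optimality of the two solutions directly, using the $\lambda=0$ solution as a feasible comparison point inside the $\lambda>0$ problem. The only nontrivial case is $\lambda>0$, since at $\lambda=0$ the claimed inequality is an equality by definition, so I would assume $\lambda>0$ throughout. The first thing to pin down is that the minimization in \eqref{eq:fullargmin} and the two-stage minimization in \eqref{eq:WHargmin}--\eqref{eq:thetaargmin} are carried out over the same feasible set: nonnegativity of $W$ and $H$, the $\ell_1$ normalization \eqref{eq:norm} on the rows of $H$, and $\theta\in\mathbb{R}^{r+1}$ unconstrained. Consequently the triple $(W\ppow{0},H\ppow{0},\theta\ppow{0})$ is admissible in \eqref{eq:fullargmin}, which is precisely what licenses the comparison.

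Writing the shorthand $N\ppow{\lambda}=N(W\ppow{\lambda},H\ppow{\lambda};X)$ and $R\ppow{\lambda}=R(W\ppow{\lambda},\theta\ppow{\lambda};Y)$, and likewise with superscript $0$, the argument is a short chain of inequalities. Global optimality of $(W\ppow{\lambda},H\ppow{\lambda},\theta\ppow{\lambda})$ for $F\ppow{\lambda}$, evaluated against the admissible point above, gives
\begin{equation*}
N\ppow{\lambda}+\lambda R\ppow{\lambda}\;\leq\;N\ppow{0}+\lambda R\ppow{0}.
\end{equation*}
Separately, $(W\ppow{0},H\ppow{0})$ minimizes $N$ by \eqref{eq:WHargmin}, so $N\ppow{0}\leq N\ppow{\lambda}$, i.e.\ $N\ppow{0}-N\ppow{\lambda}\leq 0$. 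Rearranging and inserting this bound yields
\begin{equation*}
\lambda R\ppow{\lambda}\;\leq\;\bigl(N\ppow{0}-N\ppow{\lambda}\bigr)+\lambda R\ppow{0}\;\leq\;\lambda R\ppow{0},
\end{equation*}
and dividing by $\lambda>0$ gives $R\ppow{\lambda}\leq R\ppow{0}$, as claimed.

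I expect no genuine technical obstacle: this is the standard fact that adding a penalty term and re-optimizing cannot increase the penalized component above its value at the unpenalized baseline. The only points deserving care are bookkeeping ones, namely verifying that the two formulations share the same constraint set so that $(W\ppow{0},H\ppow{0},\theta\ppow{0})$ is legitimately feasible in \eqref{eq:fullargmin}, and noting that the optimality of $\theta\ppow{0}$ from \eqref{eq:thetaargmin} is what makes $R\ppow{0}$ the \emph{smallest} regression error attainable at the unsupervised weights $W\ppow{0}$ --- so the result is proved against the strongest (and most meaningful) baseline rather than an arbitrary one. I would also remark that the uniqueness hypothesis of Remark \ref{rmk:unique} is not actually invoked in the inequality; it serves only to make $R\ppow{\lambda}$ and $R\ppow{0}$ unambiguous quantities, and the bound in fact holds for any choice of global minimizers.
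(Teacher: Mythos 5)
Your proof is correct and follows essentially the same argument as the paper's: compare $F\ppow{\lambda}$ at its global minimizer against the feasible point $(W\ppow{0},H\ppow{0},\theta\ppow{0})$, invoke the minimality of $N(W\ppow{0},H\ppow{0};X)$ from Eq.~\eqref{eq:WHargmin}, and divide by $\lambda>0$. Your added remarks on feasibility and on the uniqueness hypothesis being inessential are accurate but do not change the substance.
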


\begin{theorem}[Weakly Decreasing Regression Error]
Let $0 \leq \lambda_1 < \lambda_2$ be given where $W\ppow{\lambda_i}, H\ppow{\lambda_i}, \theta\ppow{\lambda_i}$ are the unique (as per Remark \ref{rmk:unique}) global minimizers of \sys for $i=1,2$. Then $R(W\ppow{\lambda_2},\theta\ppow{\lambda_2};Y) \leq R(W\ppow{\lambda_1},\theta\ppow{\lambda_1};Y).$ \label{thm:mono}
\end{theorem}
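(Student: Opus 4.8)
The plan is to run the standard regularization-path interchange argument. First I would fix notation for the two pieces of the objective evaluated at each optimum: write $N_i := N(W\ppow{\lambda_i}, H\ppow{\lambda_i}; X)$ and $R_i := R(W\ppow{\lambda_i}, \theta\ppow{\lambda_i}; Y)$ for $i = 1, 2$, so that the optimal objective value at weight $\lambda_i$ is $N_i + \lambda_i R_i$. Before using these, I would note they are genuinely well-defined despite the non-uniqueness flagged in Remark \ref{rmk:unique}: the reparametrizations described there leave both the product $WH$ and the linear prediction $\theta_1 + W\theta_{2:(r+1)}$ invariant, so $N$ and $R$ take the same value on every representative of a given optimum. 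Hence $N_i$ and $R_i$ are unambiguous quantities attached to $\lambda_i$.

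The core of the proof is a pair of optimality comparisons. Because $(W\ppow{\lambda_1}, H\ppow{\lambda_1}, \theta\ppow{\lambda_1})$ is a global minimizer of $F\ppow{\lambda_1}$, substituting the $\lambda_2$-optimum as a feasible competitor yields an objective value no smaller than the minimum, that is,
\[
N_1 + \lambda_1 R_1 \leq N_2 + \lambda_1 R_2.
\]
Symmetrically, global optimality at $\lambda_2$ gives $N_2 + \lambda_2 R_2 \leq N_1 + \lambda_2 R_1$. Adding these two inequalities cancels $N_1 + N_2$ from both sides and leaves $\lambda_1 R_1 + \lambda_2 R_2 \leq \lambda_1 R_2 + \lambda_2 R_1$, which I would regroup as $(\lambda_2 - \lambda_1)(R_1 - R_2) \geq 0$. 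Since $\lambda_2 - \lambda_1 > 0$ by hypothesis, dividing through forces $R_2 \leq R_1$, i.e., $R(W\ppow{\lambda_2}, \theta\ppow{\lambda_2}; Y) \leq R(W\ppow{\lambda_1}, \theta\ppow{\lambda_1}; Y)$, which is the claim.

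The step I expect to need the most care is the boundary case $\lambda_1 = 0$, since the $\lambda = 0$ optimum is defined by the two-stage recipe in Eqs. \eqref{eq:WHargmin}--\eqref{eq:thetaargmin} rather than as a single joint argmin, so the first optimality inequality does not follow verbatim. Here I would observe that at $\lambda = 0$ the objective collapses to $F\ppow{0} = N$, which is independent of $\theta$, and the two-stage definition does select $(W\ppow{0}, H\ppow{0})$ to globally minimize $N$. The first inequality then reduces to the valid statement $N_0 \leq N_2$ (both $R$-terms carry the factor $\lambda_1 = 0$), so the interchange argument proceeds exactly as above. Taking $\lambda_1 = 0$ in the conclusion recovers Proposition \ref{prop:nonzero} as a special case, so a single argument handles all $0 \leq \lambda_1 < \lambda_2$.
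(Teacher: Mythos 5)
Your proof is correct and follows essentially the same interchange argument as the paper: both substitute each optimum into the other's objective as a feasible competitor, add the two resulting optimality inequalities so the $N$-terms cancel, and divide by $\lambda_2 - \lambda_1 > 0$ to conclude $R(W\ppow{\lambda_2},\theta\ppow{\lambda_2};Y) \leq R(W\ppow{\lambda_1},\theta\ppow{\lambda_1};Y)$. The only difference is organizational: the paper dispatches the boundary case $\lambda_1 = 0$ by citing Proposition \ref{prop:nonzero}, whereas you fold it into the same argument by observing that the first inequality degenerates to $N(W\ppow{0},H\ppow{0};X) \leq N(W\ppow{\lambda_2},H\ppow{\lambda_2};X)$, which is valid under the two-stage definition in Eqs. \eqref{eq:WHargmin}--\eqref{eq:thetaargmin} and recovers that proposition as a special case.
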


\begin{remark}
Proposition \ref{prop:nonzero} and Theorem \ref{thm:mono} are based on obtaining a global minimum. In practice, we may only find a local minimum. 
\end{remark}

Assuming we have the optimal solutions, Proposition \ref{prop:nonzero} tells us that the regression error for $\lambda>0$ is no worse than the regression error with $\lambda=0$ and could in fact be better. Thus, the intuition that selecting topics while paying attention to the regression error is practical. Then Theorem \ref{thm:mono} says that the regression error is weakly monotonically decreasing as $\lambda$ increases. In practical application, we find the error in fact strictly monotonically decreases.

\subsection{Algorithm}

\label{ssec:algs}

Our minimization approach is iterative and based on the alternating nonnegative least squares \cite{kim2008nonnegative} approach. Due to the coupling of NMF and Regression errors, other approaches such as multiplicative or additive updates \cite{lee2000algorithms} are less natural. Each iteration consists of: (1) holding $H$ and $\theta$ fixed while optimizing each row of $W$ separately (non-negative least squares); (2) holding $W$ and $\theta$ fixed while optimizing each column of $H$ separately (non-negative least squares); and finally (3) holding $W$ and $H$ fixed while optimizing over $\theta$. The error is nondecreasing between iterations and from one optimization to the next. We now derive and justify this approach (Algorithm \ref{alg:overall}) in increasing complexity of cases.

\paragraph{$W$ and $H$ fixed.} If $W$ and $H$ are given and only $\theta$ can vary then Eq. \eqref{eq:F} is minimized when $||\bar W \theta - Y||^2$ is minimized. This happens when the error, $\bar W \theta - Y$, is orthogonal to the column span of $W$ or that \begin{equation}
\theta = (\bar W^T \bar W)^{-1} \bar W^T Y. \label{eq:theta_opt}
\end{equation}
See Algorithm \ref{alg:newt}. When $\bar W$ does not have full rank, we interpret $(\bar W^T \bar W)^{-1}$ as a pseudo-inverse.

\paragraph{$W$ and $\theta$ fixed.} If $W$ and $\theta$ are given and only $H$ can change then minimizing Eq. \eqref{eq:F} requires minimizing $N(W,H;X)$. We can expand this error term out in the columns of $H$:

\begin{align*}
N(W,H;X) &= ||X - WH||_F^2 \\
&= \sum_{j=1}^m ||(X - WH)_{:,j}||^2 \\
&= \sum_{j=1}^m ||X_{:,j} - W H_{:,j}||^2.
\end{align*}
Since columnwise the terms of the sum are independent, we can minimize each column $H_{:,j}$ of $H$ separately to minimize the sum, i.e.,
\begin{align}
H_{:,j} = \argmin_{h \in \mathbb{R}_{\geq 0}^{r \times 1}} ||\bar X_{:,j} - Wh||^2, \quad j=1,2,...,m,
\end{align} as given in Algorithm \ref{alg:newH}.

\paragraph{$H$ and $\theta$ fixed.} When $H$ and $\theta$ are fixed then \eqref{eq:F} can be written out as

\begin{align}
F\ppow{\lambda} &= ||X - WH||_F^2 + \lambda ||\bar W \theta - Y||^2 \nonumber \\ 
&= \sum_{i=1}^n ||(X - WH)_{i,:}||^2 + \lambda \sum_{i=1}^n (\bar W \theta - Y)_i^2 \nonumber \\
&= \sum_{i=1}^n ||X_{i,:} - W_{i,:}H||^2 + \sum_{i=1}^n \left( \sqrt{\lambda} (\theta_0 e + W_{i,:} \bar \theta - Y) \right)_i^2 \label{eq:rowwise}
\end{align}
where $e = (1,1,...,1)^T \in \mathbb{R}^{n}$ and $\bar \theta = (\theta_2, ..., \theta_{r+1})^T \in \mathbb{R}^{r}$. Defining matrices
\begin{align}
\bar X &= \begin{bmatrix} X & | & \sqrt{\lambda}(\theta_1 e - Y) \end{bmatrix} \label{eq:xhat} \\
\bar H &= \begin{bmatrix} H & | & \sqrt{\lambda} \bar \theta \end{bmatrix} \label{eq:hhat}
\end{align}
we can rewrite \eqref{eq:rowwise} as 
\begin{align*}
F\ppow{\lambda} &= \sum_{i=1}^n ||\bar X_{i,:} - W_{i,:} \bar H||^2,
\end{align*}
which can be minimized through
\begin{align}
W_{i,:} = \argmin_{w \in \mathbb{R}_{\geq 0}^{1 \times r}} ||\bar X_{i,:} - w \bar H||^2, \quad i=1,2,...,n.
\end{align} This is precisely Algorithm \ref{alg:newW}.

\begin{algorithm}
    \SetKwInOut{KwIn}{Input}
    \SetKwInOut{KwOut}{Output}
    \KwIn{A matrix $X \in \mathbb{R}_{\geq 0}^{n \times m}$,\\ a vector $Y \in \mathbb{R}^n$, \\ a positive integer $r \in \mathbb{N}$, \\
    a scalar $\lambda \geq 0$, \\
    a relative error tolerance $\tau > 0$, and \\
    a maximum number of iterations $maxIter$.
    }
    \KwOut{Minimizers of \sys: nonnegative matrix $W \in \mathbb{R}_{\geq 0}^{n \times r}$, \\
    nonnegative matrix $H \in \mathbb{R}_{\geq 0}^{r \times m}$, and \\
    vector $\theta \in \mathbb{R}^{r+1}.$}
    $relErr = \infty, err = \infty$ \\
    Elementwise, $W \sim Unif([0,||X||_\infty))$, $H \sim Unif([0,||X||_\infty))$, $\theta \sim Unif([0,||X||_\infty)).$ \\
    $iter=0$ \\
    \While{$relErr > \tau$ {\bf and} $iter<maxIter$}{
    $W \gets newW$ as per Algorithm \ref{alg:newW} \\
    $H \gets newH$ as per Algorithm \ref{alg:newH} \\
    $\theta \gets new{}\theta$ as per Algorithm \ref{alg:newt} \\
    Normalize $W$, $H$, and $\theta$ as per Algorithm \ref{alg:norm} \\
    $errTemp = F\ppow{\lambda}(W,H,\theta;X,Y)$ \\
    \If{$err<\infty$}{
$relErr \gets |err-errTemp|/err$
    } 
    $err \gets errTemp$\\
    $iter \gets iter+1$
}
    \KwRet{$W,H,\theta$}
    \caption{Overall CSSNMF algorithm.} \label{alg:overall}
\end{algorithm}

\begin{algorithm}
    \SetKwInOut{KwIn}{Input}
    \SetKwInOut{KwOut}{Output}
    \KwIn{A matrix $X \in \mathbb{R}_{\geq 0}^{n \times m}$,\\ a vector $Y \in \mathbb{R}^n$, \\
    a matrix $W \in \mathbb{R}_{\geq 0}^{n \times r}$, \\
    a matrix $H \in \mathbb{R}_{\geq 0}^{r \times m}$, \\ and
    a scalar $\lambda \geq 0.$    
    }
    \KwOut{A new value for $W$.}
    $\bar \theta = (\theta_2, ..., \theta_{r+1})^T$ \\
    $\bar X = \begin{bmatrix} X & | & \sqrt{\lambda}(\theta_1 - Y) \end{bmatrix}$ \\
    $\bar W = \begin{bmatrix} H & | & \sqrt{\lambda} \bar \theta \end{bmatrix}$ \\
    \For{$i \gets 1 ... n$}{
    $W_{i,:} \gets \argmin_{w \in \mathbb{R}_{\geq 0}^{1 \times r}} ||\bar X_{i,:} - w \bar H||^2$
    }
    \KwRet{$W$}
    \caption{Updating $W$.} \label{alg:newW}
\end{algorithm}

\begin{algorithm}
    \SetKwInOut{KwIn}{Input}
    \SetKwInOut{KwOut}{Output}
    \KwIn{A matrix $X \in \mathbb{R}_{\geq 0}^{n \times m}$, \\
    a matrix $W \in \mathbb{R}_{\geq 0}^{n \times r}$, and \\
    a matrix $H \in \mathbb{R}_{\geq 0}^{r \times m}.$    
    }
    \KwOut{A new value for $H$.}
    \For{$j \gets 1 ... m$}{
    $H_{:,j} \gets \argmin_{h \in \mathbb{R}_{\geq 0}^{r \times m}} ||\bar X_{:,j} - Wh||^2$
    }
    \KwRet{$H$}
    \caption{Updating $H$.} \label{alg:newH}
\end{algorithm}

\begin{algorithm}
    \SetKwInOut{KwIn}{Input}
    \SetKwInOut{KwOut}{Output}
    \KwIn{A vector $Y \in \mathbb{R}^n$, and \\
    a matrix $W \in \mathbb{R}_{\geq 0}^{n \times r}$    
    }
    \KwOut{A new value for $\theta$.}
    $e = (1,1,...,1)^T \in \mathbb{R}^{n \times 1}$ \\
    $\bar W = \begin{bmatrix} e & | & W \end{bmatrix}$ \\
    \KwRet{$(\bar W^T \bar W)^{-1} \bar W^T Y$}
    \caption{Updating $\theta$.} \label{alg:newt}
\end{algorithm}

\begin{algorithm}
    \SetKwInOut{KwIn}{Input}
    \SetKwInOut{KwOut}{Output}
    \KwIn{A matrix $W \in \mathbb{R}_{\geq 0}^{n \times r}$, \\
    a matrix $H \in \mathbb{R}_{\geq 0}^{r \times m}$, and \\
    a vector $\theta \in \mathbb{R}^{r+1}.$
    }
    \KwOut{New values for $W$, $H$, and $\theta$.}
    $S \in \mathbb{R}_{\geq 0}^{r}$ a vector of row sums of $H$. \\
    $S \gets \mathrm{diag}(S)$ \\
    $W \gets W S$. \\
    $H \gets S^{-1} H$. \\
    $\theta_{2:r+1} \gets S^{-1} \theta_{2:r+1}.$ \\
    \KwRet{$W$, $H$, and $\theta$.}
    \caption{Normalization process.} \label{alg:norm}
\end{algorithm}

\begin{algorithm}
    \SetKwInOut{KwIn}{Input}
    \SetKwInOut{KwOut}{Output}
    \KwIn{A matrix $H \in \mathbb{R}_{\geq 0}^{r \times m}$, \\
    a vector $\theta \in \mathbb{R}^{r+1}$, \\ and 
    a vector $x \in \mathbb{R}^{1 \times m}$.
    }
    \KwOut{Model prediction for response variable, $\hat y$.}
    Compute $w = \argmin_{w \in \mathbb{R}_{\geq 0}^{1 \times r}} ||wH - x||^2$. \\
    Compute $\hat y = \theta_1 + w \theta_{2:r+1}$. \\
    \KwRet{$\hat y$}
    \caption{Prediction process.} \label{alg:predict}
\end{algorithm}

For our optimizations and linear algebra, we used \texttt{Numpy} \cite{harris2020array} and \texttt{SciPy} \cite{2020SciPy-NMeth}. Besides the steps outlined within these algorithms, we employed two additional modifications: (1) we defined $\epsilon = 10^{-10}$ and any entries in the $H$ less than $\epsilon$ were replaced by $\epsilon$ ( otherwise on some occasions, the $W$ update step would fail); and (2), the minimizations at times yielded worse objective errors than already obtained and when this happened, we did not update to the worse value.

As noted with other NMF routines, we might not reach a global minimizer \cite{berry2007algorithms}. In practice the minimization should be run repeatedly with different random initializations to find a more ideal local minimum.

From an application standpoint, we wish to run the model on documents it has not been trained on. Algorithm \ref{alg:predict} stipulates how a prediction takes place. We first find the best nonnegative decomposition of the document, a vector in $\mathbb{R}^{m}$, into the topic basis, projecting to $r-$dimensions. With the representation in topic-coordinates, we then use the linear model.

\section{Proofs}\label{sec:proofs}

Before proceeding to practical applications, we prove Proposition \ref{prop:nonzero} and Theorem \ref{thm:mono}.

\begin{proof}[Proof of Proposition \ref{prop:nonzero}]
If $\lambda>0$ then
\begin{align*}
F\ppow{\lambda}(W\ppow{\lambda},H\ppow{\lambda},\theta\ppow{\lambda};X,Y) &\leq F\ppow{\lambda}(W\ppow{0},H\ppow{0},\theta\ppow{0};X,Y) \implies \\
N(W\ppow{\lambda},H\ppow{\lambda};X) + \lambda R(W\ppow{\lambda},\theta\ppow{\lambda};Y) &\leq N(W\ppow{0},H\ppow{0};X) + \lambda R(W\ppow{0},\theta\ppow{0};Y) \\
&\implies \\
\lambda(R(W\ppow{\lambda},\theta\ppow{\lambda};Y) - R(W\ppow{0},\theta\ppow{0};Y)) &\leq N(W\ppow{0},H\ppow{0};X) - N(W\ppow{\lambda},H\ppow{\lambda};X) \\
&\leq 0.
\end{align*}
The first inequality comes from how $(W\ppow{\lambda},H\ppow{\lambda},\theta\ppow{\theta})$ are defined by Eq. \eqref{eq:fullargmin}. The final inequality comes from how $(W\ppow{0},H\ppow{0})$ are defined as minimizers in Eq. \eqref{eq:WHargmin}. 

Since we first assumed $\lambda>0$, we obtain $R(W\ppow{\lambda},\theta\ppow{\lambda};Y) \leq R(W\ppow{0},\theta\ppow{0};Y).$ Finally if $\lambda=0$ then there is equality with $R(W\ppow{\lambda},\theta\ppow{\lambda};Y) = R(W\ppow{0},\theta\ppow{0};Y).$
\end{proof}

\begin{proof}[Proof of Theorem \ref{thm:mono}]
Note that if $\lambda_1 = 0$ then Theorem \ref{prop:nonzero} already applies so we assume $0<\lambda_1<\lambda_2$. We have that
\begin{align}
&F\ppow{\lambda_1}(W\ppow{\lambda_1},H\ppow{\lambda_1},\theta\ppow{\lambda_1};X,Y) \leq F\ppow{\lambda_1}(W\ppow{\lambda_2},H\ppow{\lambda_2},\theta\ppow{\lambda_2};X,Y) \implies& \nonumber \\ 
&\lambda_1\left( R(W\ppow{\lambda_1},\theta\ppow{\lambda_1};Y) - R(W\ppow{\lambda_2},\theta\ppow{\lambda_2};Y) \right) \leq& \nonumber \\
& N(W\ppow{\lambda_2},H\ppow{\lambda_2};X) - N(W\ppow{\lambda_1},H\ppow{\lambda_1};X).& \label{eq:e12}
\end{align}
We also have
\begin{multline}
\lambda_2\left( R(W\ppow{\lambda_2},\theta\ppow{\lambda_2};Y) - R(W\ppow{\lambda_1},\theta\ppow{\lambda_1};Y) \right) \leq  \\
 N(W\ppow{\lambda_1},H\ppow{\lambda_1};X) - N(W\ppow{\lambda_2},H\ppow{\lambda_2};X). \label{eq:e21}
\end{multline}
Adding Eqs. \eqref{eq:e12} and \eqref{eq:e21} together,
\begin{equation*}
(\lambda_1 - \lambda_2) R(W\ppow{\lambda_1},\theta\ppow{\lambda_1};Y) + (\lambda_2 - \lambda_1) R(W\ppow{\lambda_2},\theta\ppow{\lambda_2};Y) \leq 0 
\end{equation*}
which, upon dividing by $\lambda_2-\lambda_1 > 0$, directly gives $$R(W\ppow{\lambda_2},\theta\ppow{\lambda_2};Y) \leq R(W\ppow{\lambda_1},\theta\ppow{\lambda_1};Y).$$
\end{proof}

\section{Synthetic Datasets}

\label{sec:concept}

In our synthetic data, we generate a matrix $X$ that has nonnegative factors $W$ and $H$, but add noise. We also generate a response vector $Y$ given as the matrix-vector product $\bar W \theta$ with noise. We investigate three items: (1) that the method does in fact work to decrease the objective function; (2) that the regression errors decrease with increasing $\lambda$; and (3) the effects of overfitting.

\subsection{Generating Synthetic Data}

Our synthetic data generation can be summarized as follows:
\begin{enumerate}
\item We fix values of $n=100, m=40$, $M=20$, and $r=4.$ 
\item We then define $\eta_x=\eta_y=4.$ 
\item We pick $X \in \mathbb{R}^{n \times r}$ such that each entry is $\sim Unif([0,M))$. We likewise choose $H \in \mathbb{R}^{r \times m}.$
\item We set $X = WH$.
\item We pick $\theta \in \mathbb{R}^{r+1}$ such that each element is $\sim Unif([-M/2,M/2)).$
\item We set $Y = \bar W \theta.$
\item We perturb $X$ with noise $\mathcal{D}_X$ and $Y$ with noise $\sim \mathcal{D}_Y$.
\item Any negative $X$-entries are set to $0$.
\end{enumerate}

We consider two different forms for $\mathcal{D}_X$ and $\mathcal{D}_Y$:
\begin{itemize}
\item being elementwise $\sim \mathcal{N}(0,\eta_x^2)$ and $\sim \mathcal{N}(0,\eta_y^2)$ or
\item being elementwise $\sim Unif([0, \eta_x))$ and $\sim Unif([0, \eta_y))$.
\end{itemize}

Note that in the synthetic data, the true number of topics is $r=4$. In testing our synthetic data, we run Algorithm \ref{alg:overall} where $\tau=10^{-4}$ and $maxIter=100$. We use $70\%$ of the data for training and $30\%$ for testing.

\subsection{Investigation}

We confirm that the error in the objective function $F\ppow{\lambda}$ decreases with each iteration of Algorithm \ref{alg:overall} in Figure \ref{fig:objective_decrease} --- done with Gaussian noise. 

With the regression error being the mean squared prediction error, from Figure \ref{fig:synthetic_errors}, we see the regression error in the training does tend to decrease with $\lambda$. (There are a few small exceptions, which we believe stem from randomizations leading to an assortment of different local optima.) The overall scale of the testing errors gets smaller as $r$ goes from $1$ to $4$ and then stays steady, or even gets slightly worse as $r$ increases from $4$. Indeed $r=4$ is the ``correct" synthetic value. Given the noise as either Gaussian or uniform, the variances of $\mathcal{N}(0, \eta_y^2)$, $\eta_y^2$, and $Unif([0, \eta_y))$, $\eta_y^2/12$, serve as loose estimates for the best possible testing loss (the loss could very well be higher since noise is added to the matrix $X$ as well). When the training errors are smaller than this estimate it suggests overfitting. Since our model works with a sum of squared errors, it is expected when the errors are not Gaussian, that the model will not perform as well. Indeed, there is some degradation in testing errors in comparing the Gaussian with Uniform noise.

\begin{figure}
\centering
\includegraphics[width=0.5\linewidth]{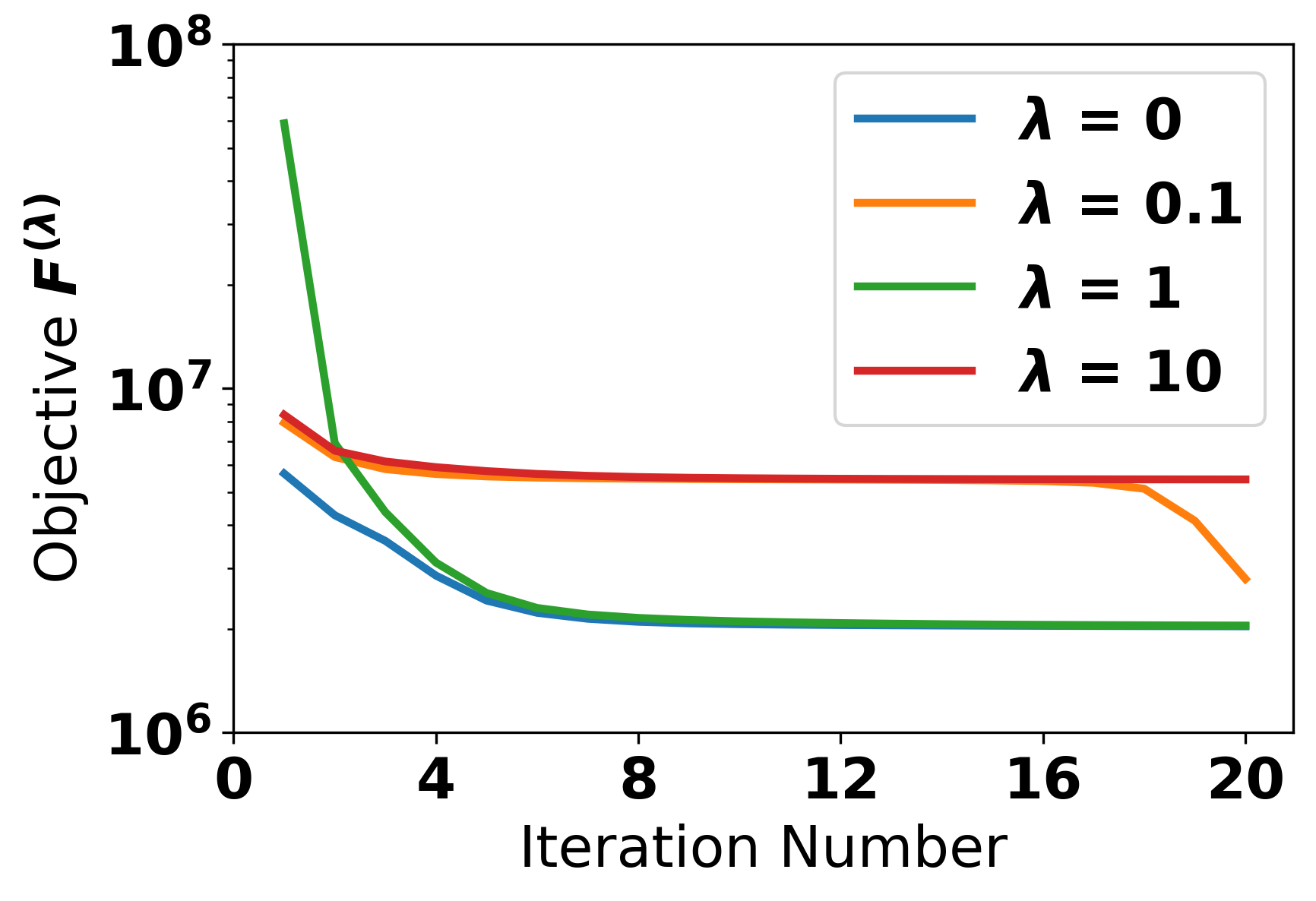}
\caption{Illustration of decreasing objective function at $r=3$ topics for $\lambda \in \{0\} \cup \{10^{i/2} | i \in \mathbb{Z} \cap [-2,2]$ \}.}
\label{fig:objective_decrease}
\end{figure}

\begin{figure}
     \centering
     \begin{subfigure}[b]{0.3\textwidth}
         \centering
         \includegraphics[width=\textwidth]{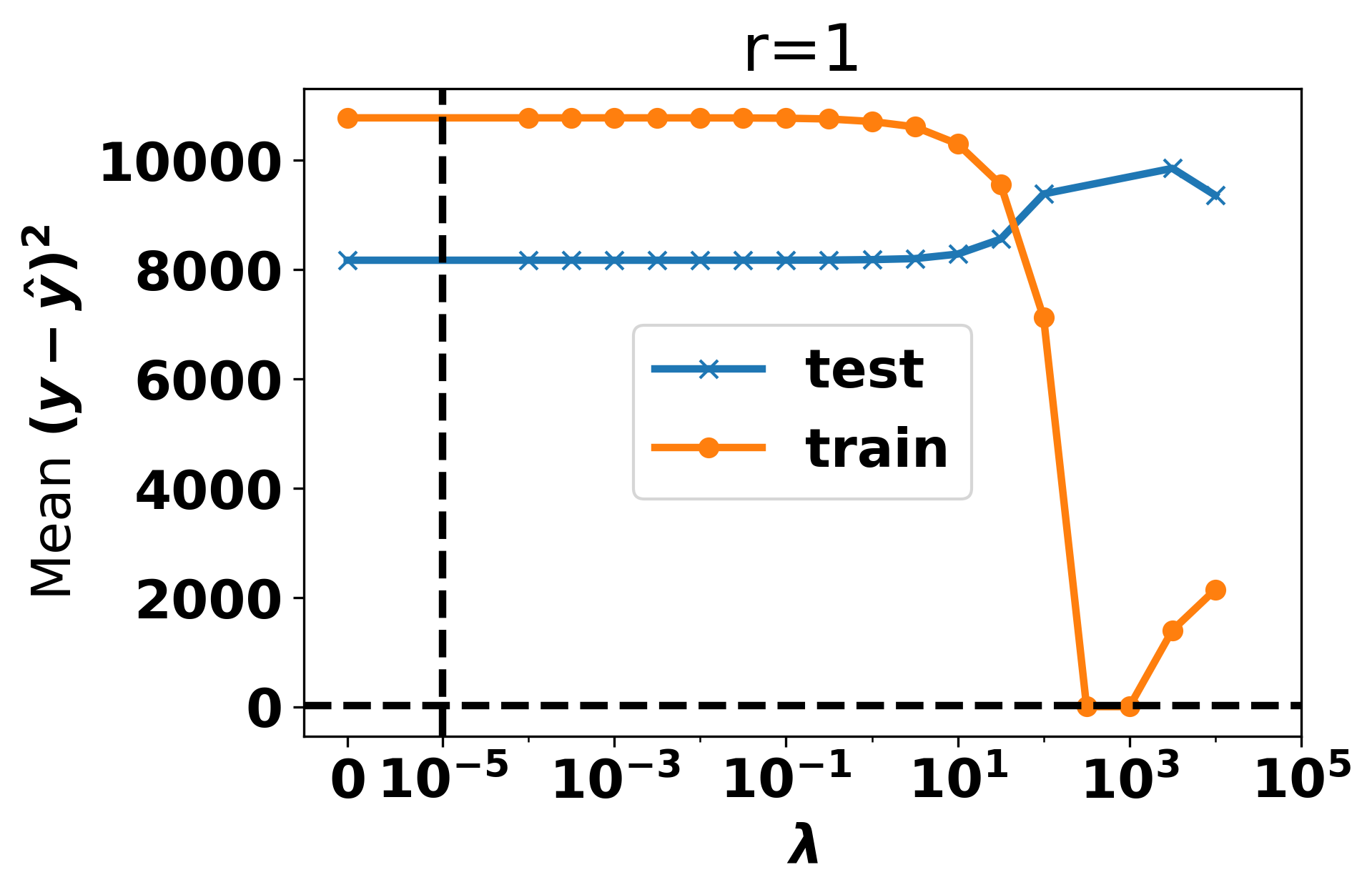}
         \caption{}
         \label{fig:s1}
     \end{subfigure}
     \hfill
     \begin{subfigure}[b]{0.3\textwidth}
         \centering
         \includegraphics[width=\textwidth]{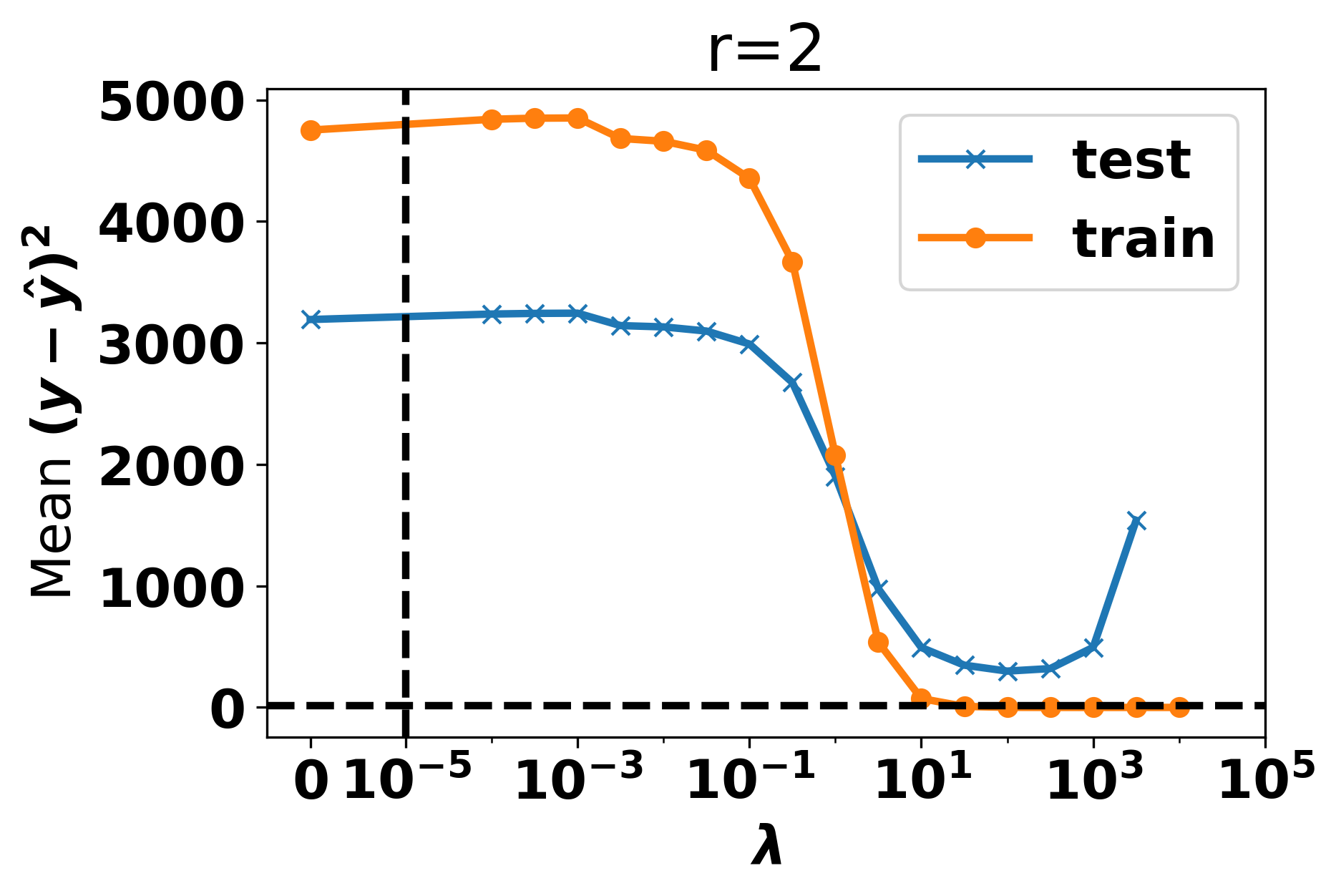}
         \caption{}
         \label{fig:s2}
     \end{subfigure} \hfill
     \begin{subfigure}[b]{0.3\textwidth}
         \centering
         \includegraphics[width=\textwidth]{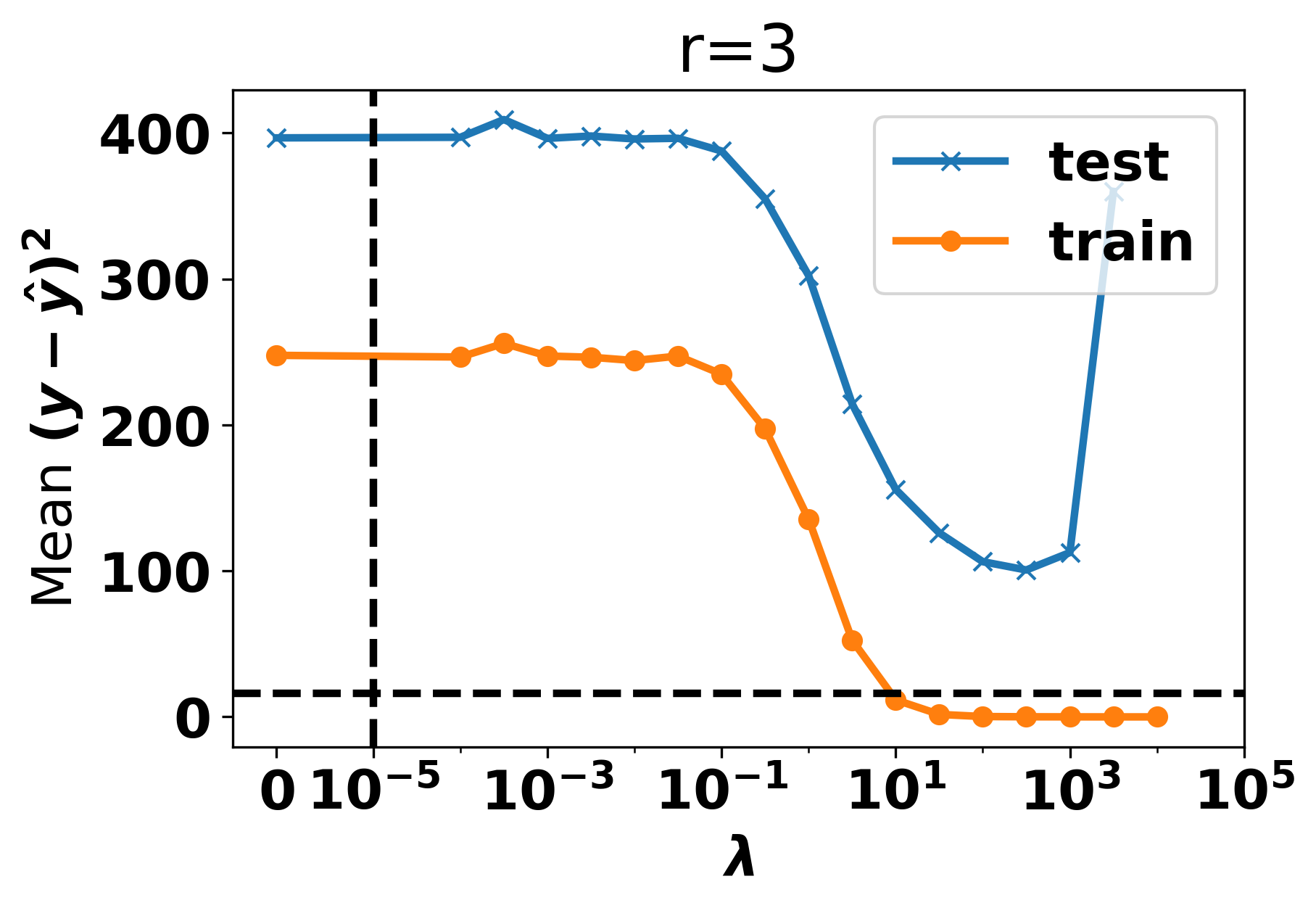}
         \caption{}
         \label{fig:s3}
     \end{subfigure} \\
     \begin{subfigure}[b]{0.3\textwidth}
         \centering
         \includegraphics[width=\textwidth]{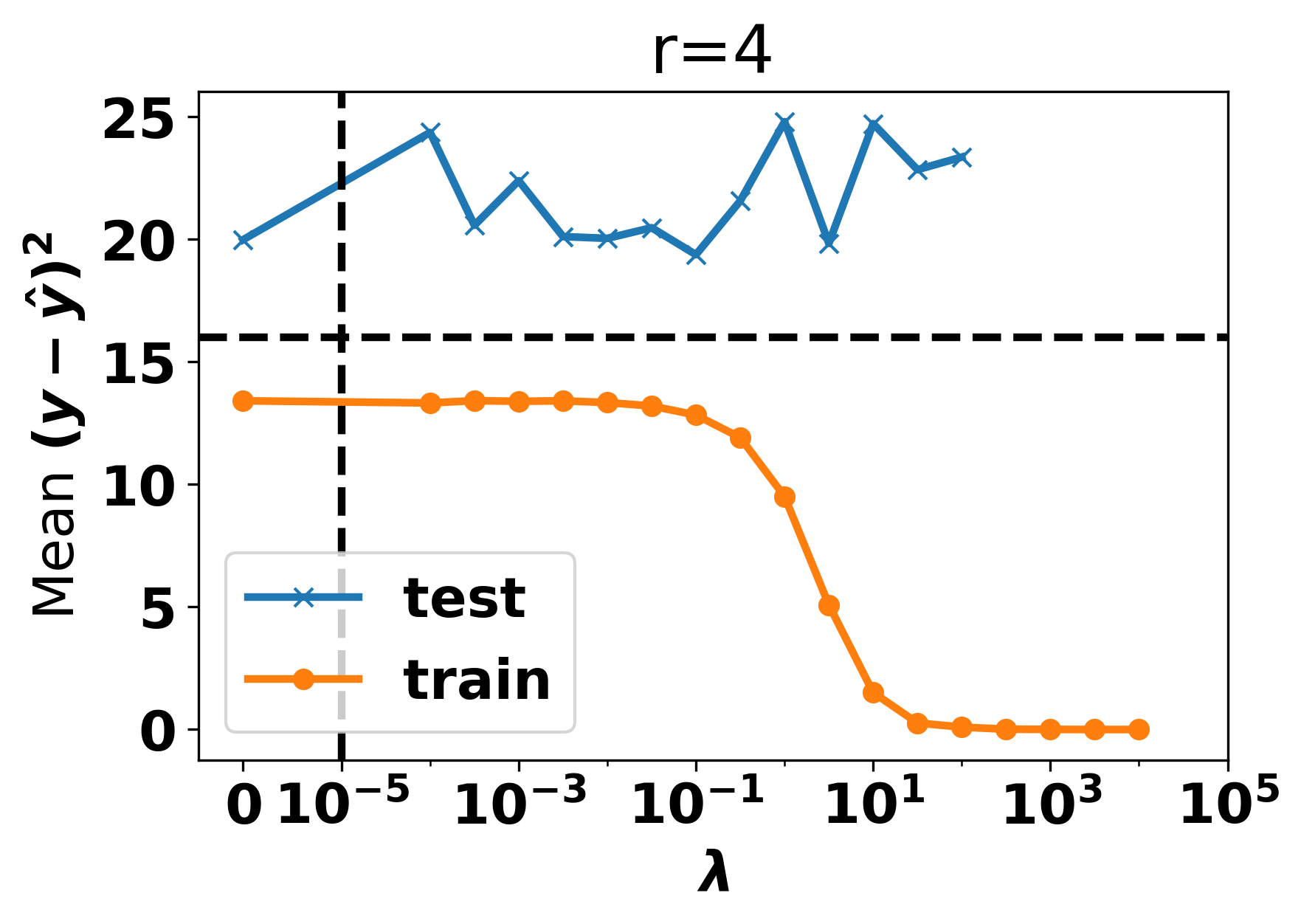}
         \caption{}
         \label{fig:s4}
     \end{subfigure} \hfill
     \begin{subfigure}[b]{0.3\textwidth}
         \centering
         \includegraphics[width=\textwidth]{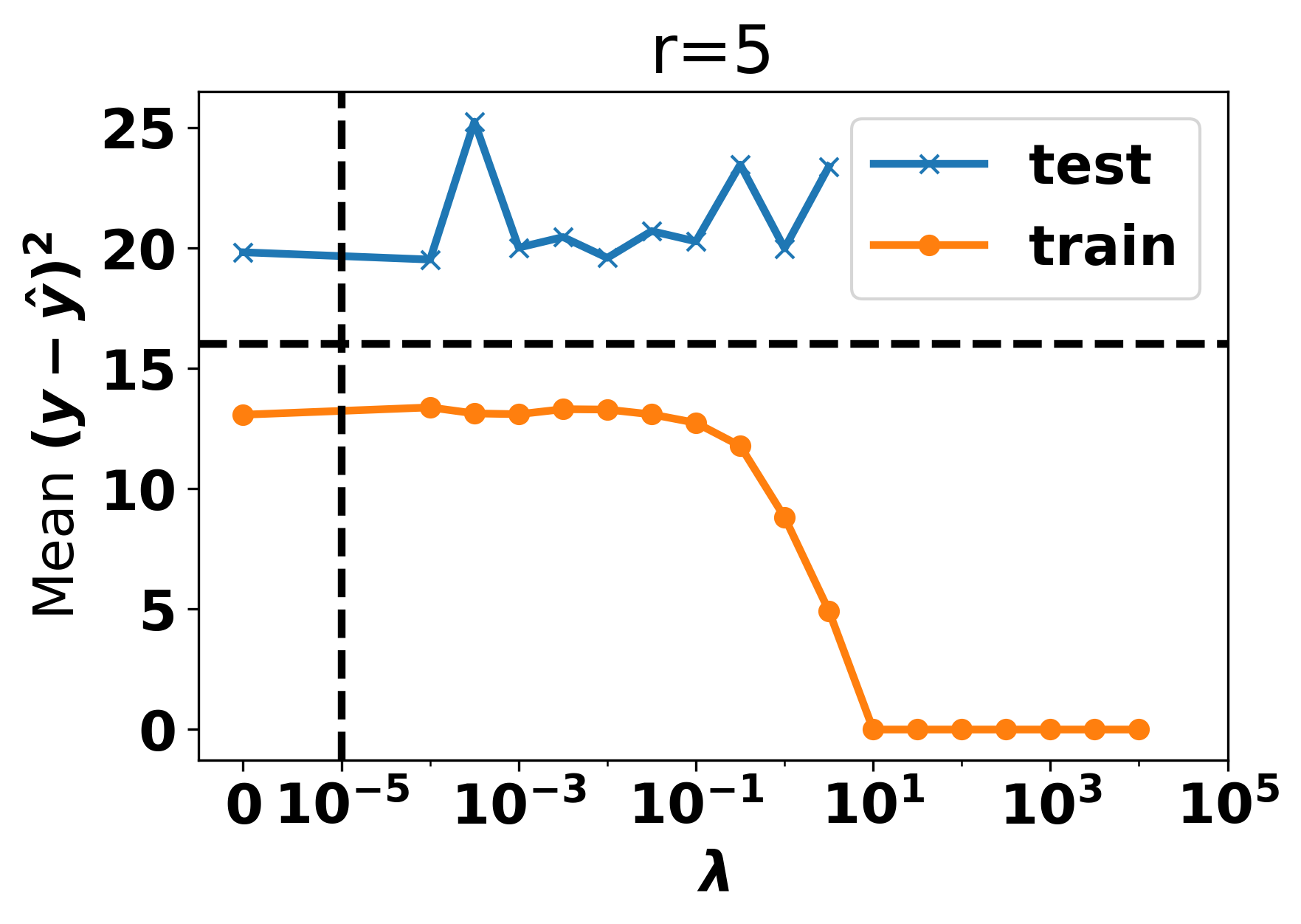}
         \caption{}
         \label{fig:s5}
     \end{subfigure}
     \hfill
     \begin{subfigure}[b]{0.3\textwidth}
         \centering
         \includegraphics[width=\textwidth]{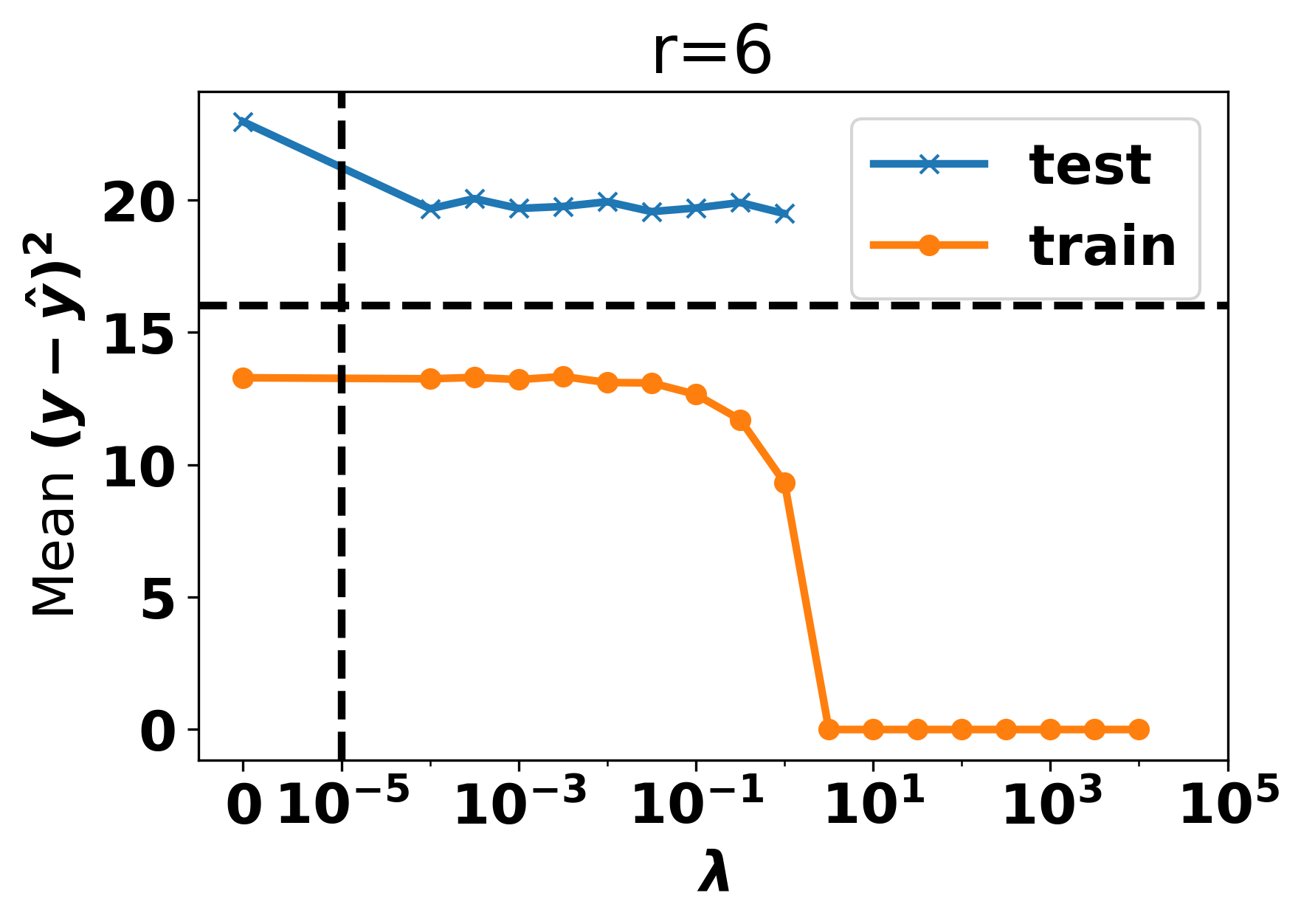}
         \caption{}
         \label{fig:s6}
     \end{subfigure}     \\
     \begin{subfigure}[b]{0.3\textwidth}
         \centering
         \includegraphics[width=\textwidth]{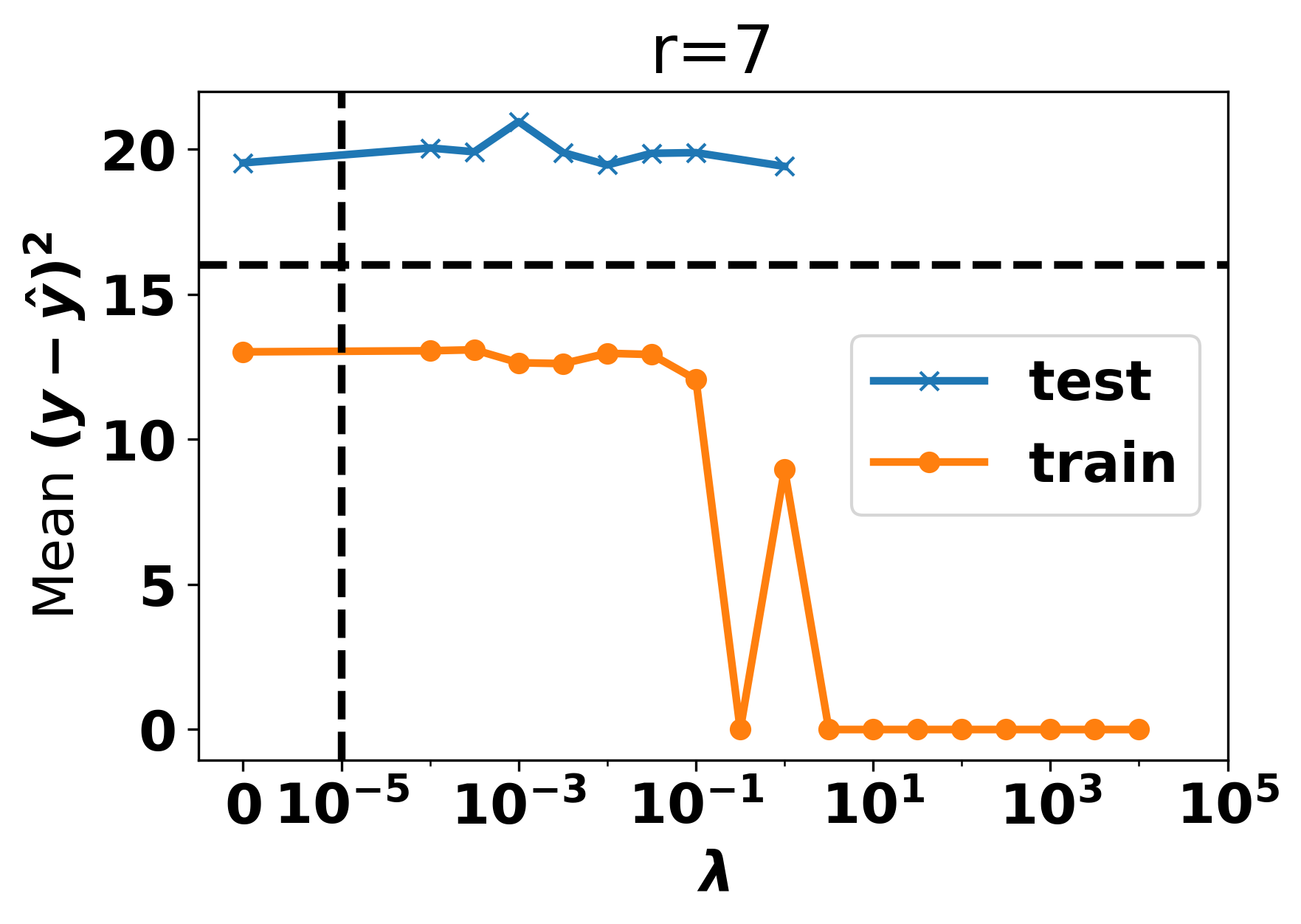}
         \caption{}
         \label{fig:s5}
     \end{subfigure}
     \hfill
     \begin{subfigure}[b]{0.3\textwidth}
         \centering
         \includegraphics[width=\textwidth]{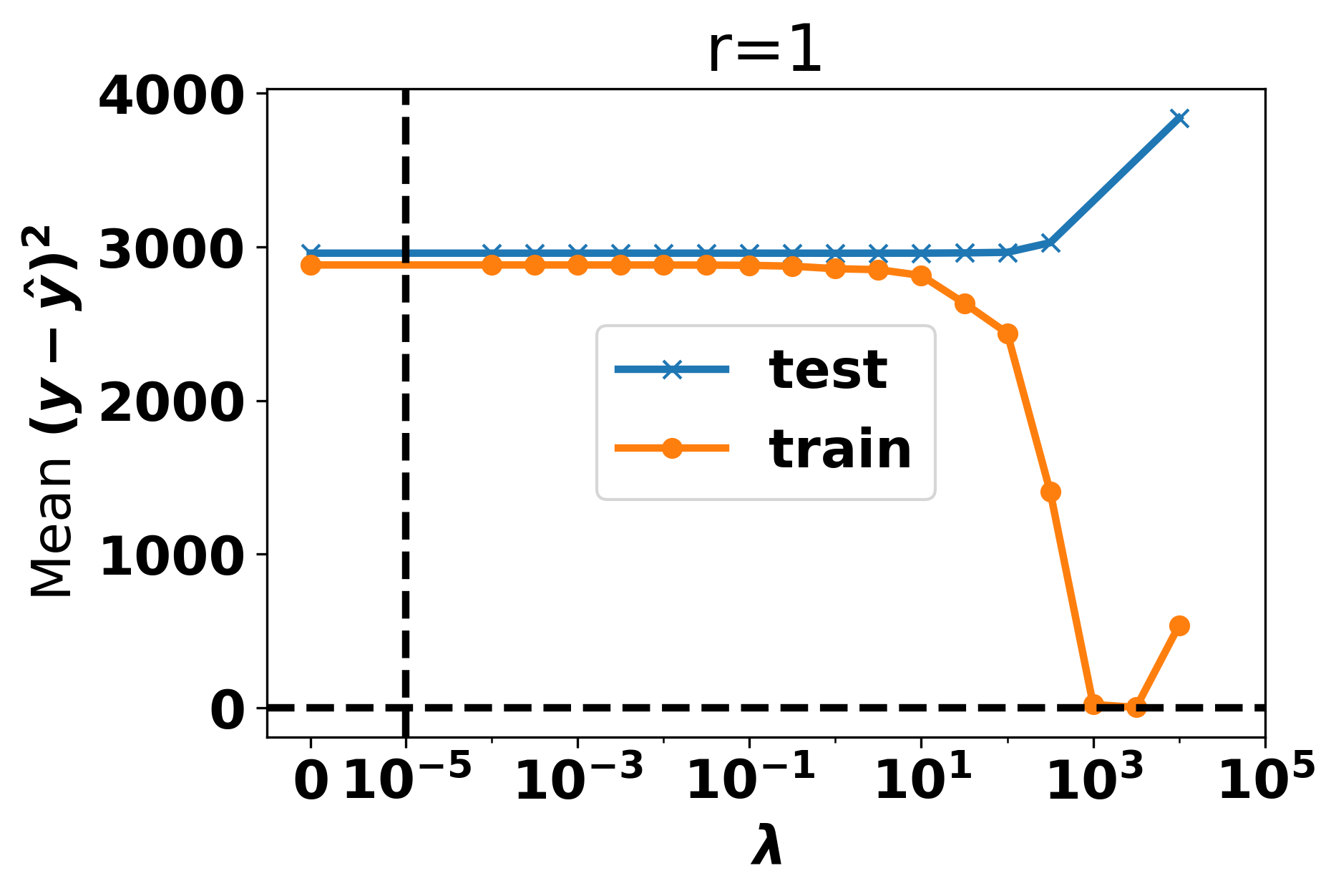}
         \caption{}
         \label{fig:s6}
     \end{subfigure} \hfill
     \begin{subfigure}[b]{0.3\textwidth}
         \centering
         \includegraphics[width=\textwidth]{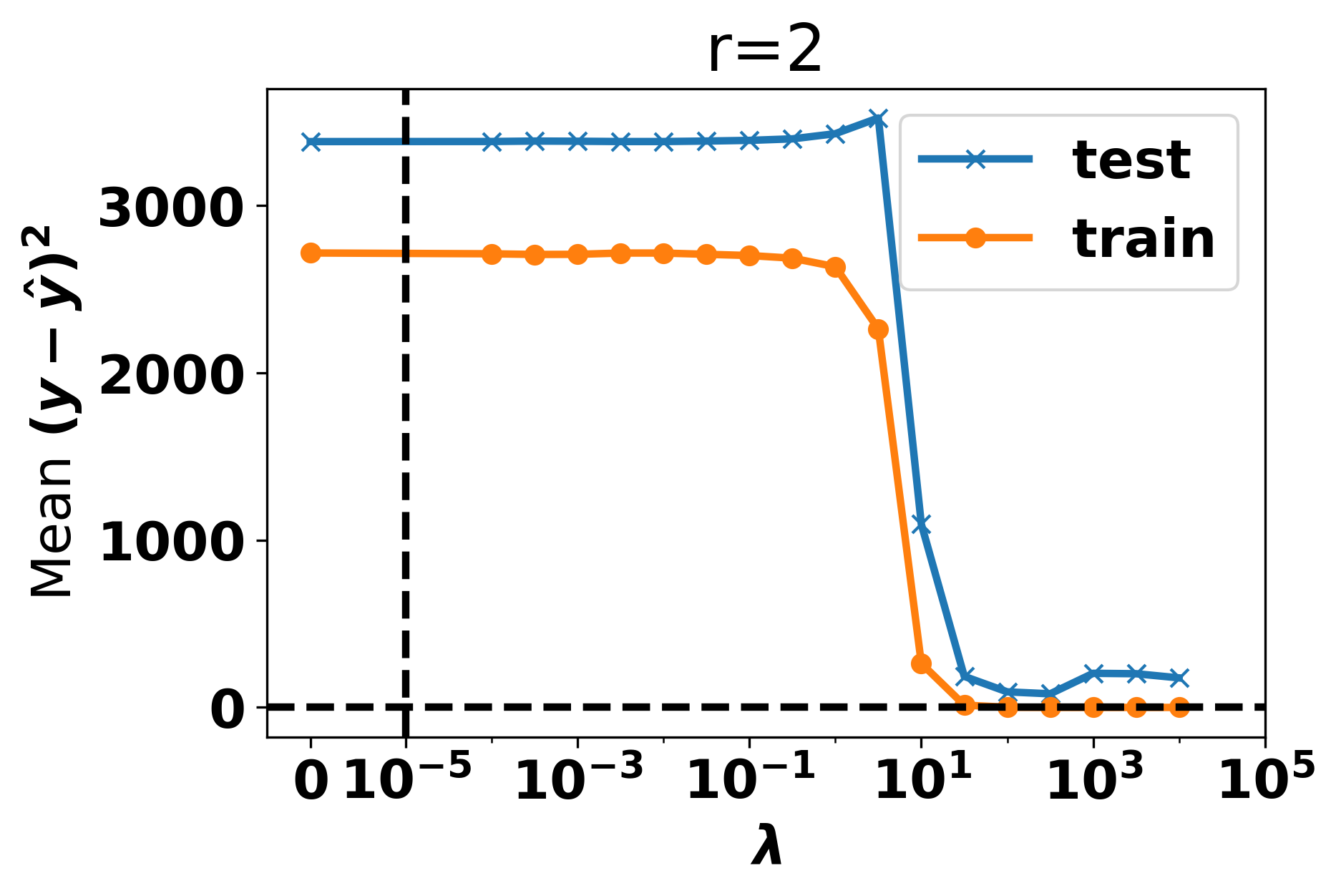}
         \caption{}
         \label{fig:s5}
     \end{subfigure} \\
     \begin{subfigure}[b]{0.3\textwidth}
         \centering
         \includegraphics[width=\textwidth]{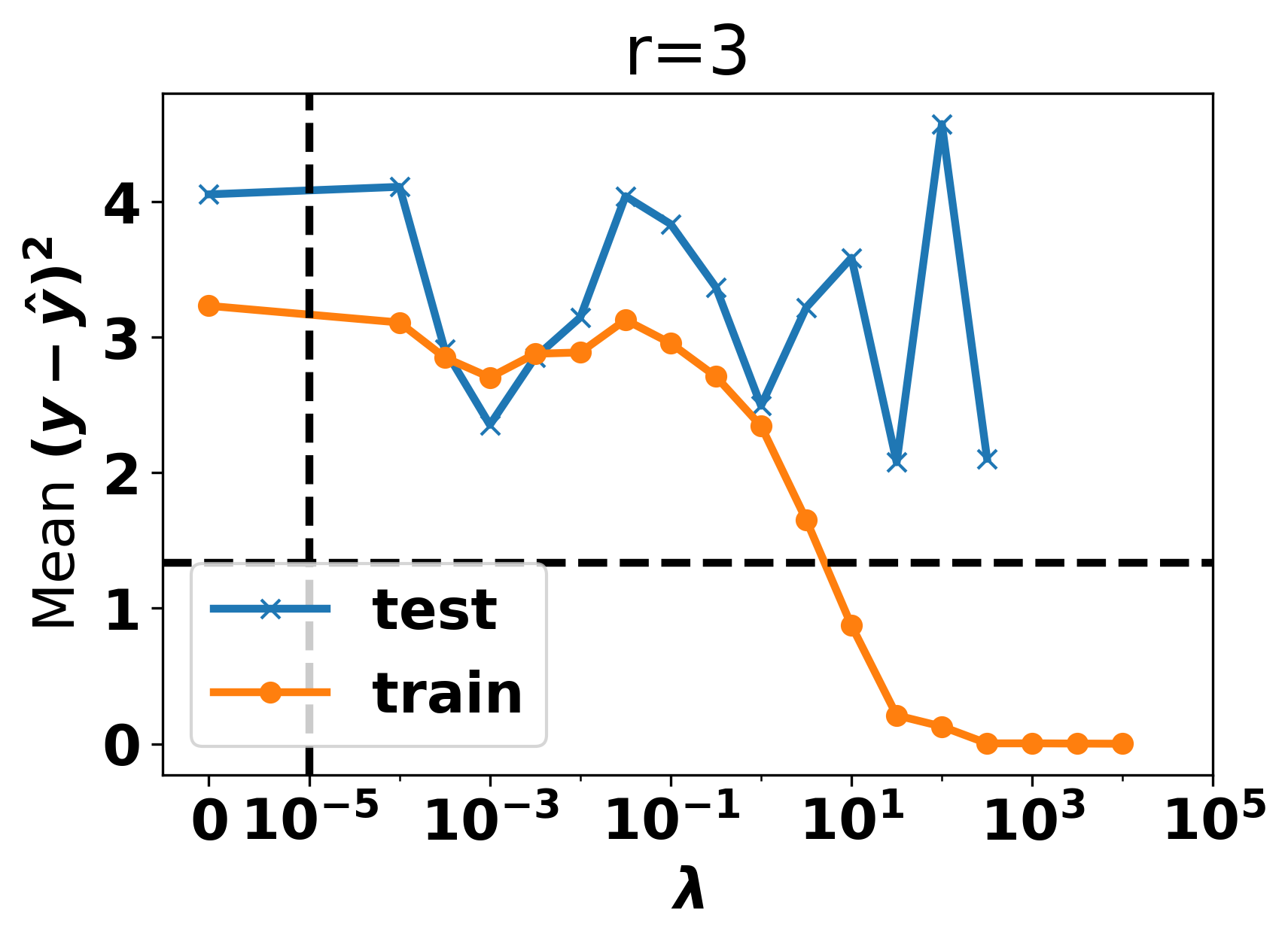}
         \caption{}
         \label{fig:s6}
     \end{subfigure} \hfill
          \begin{subfigure}[b]{0.3\textwidth}
         \centering
         \includegraphics[width=\textwidth]{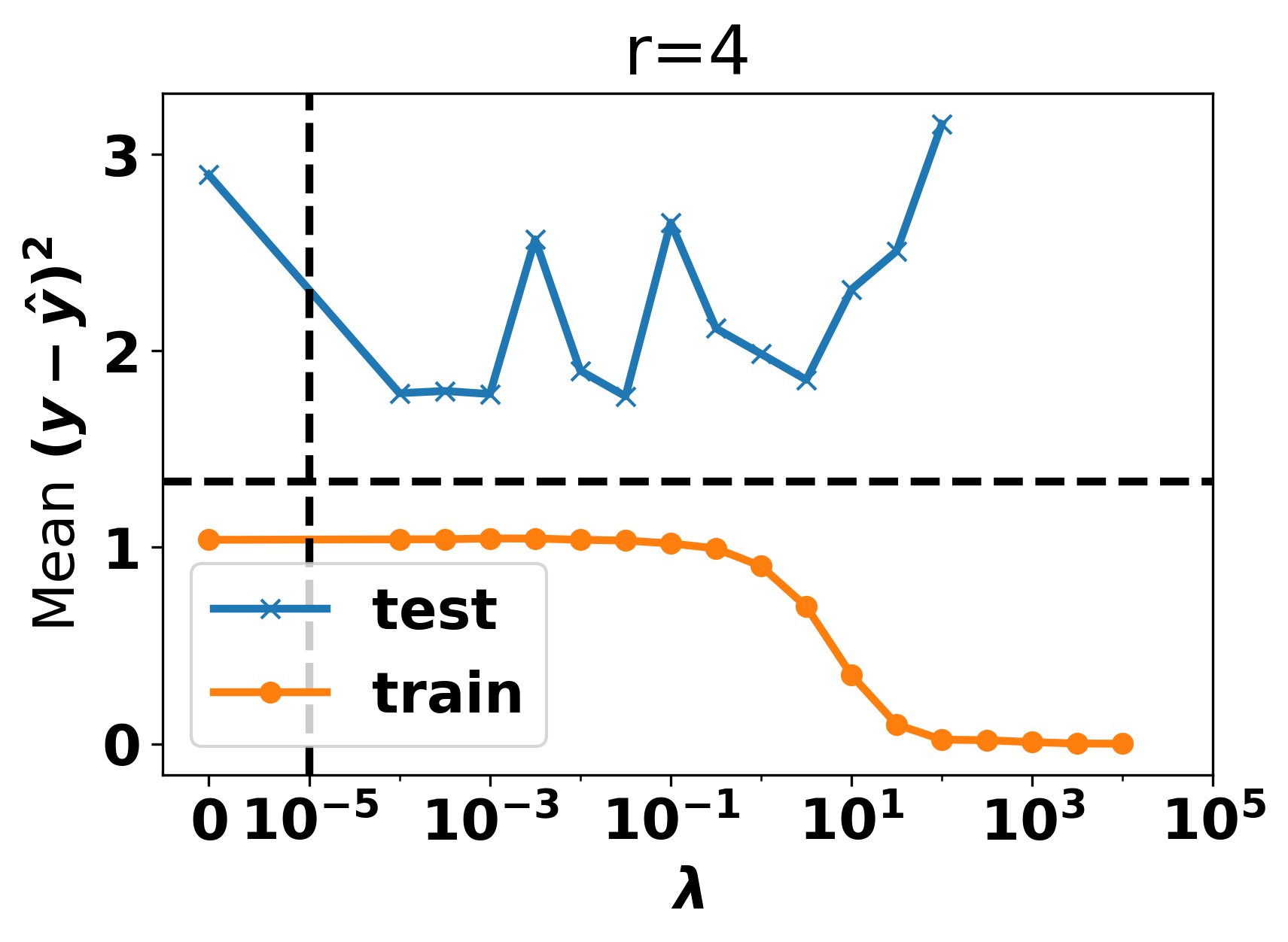}
         \caption{}
         \label{fig:s5}
     \end{subfigure} \hfill
     \begin{subfigure}[b]{0.3\textwidth}
         \centering
         \includegraphics[width=\textwidth]{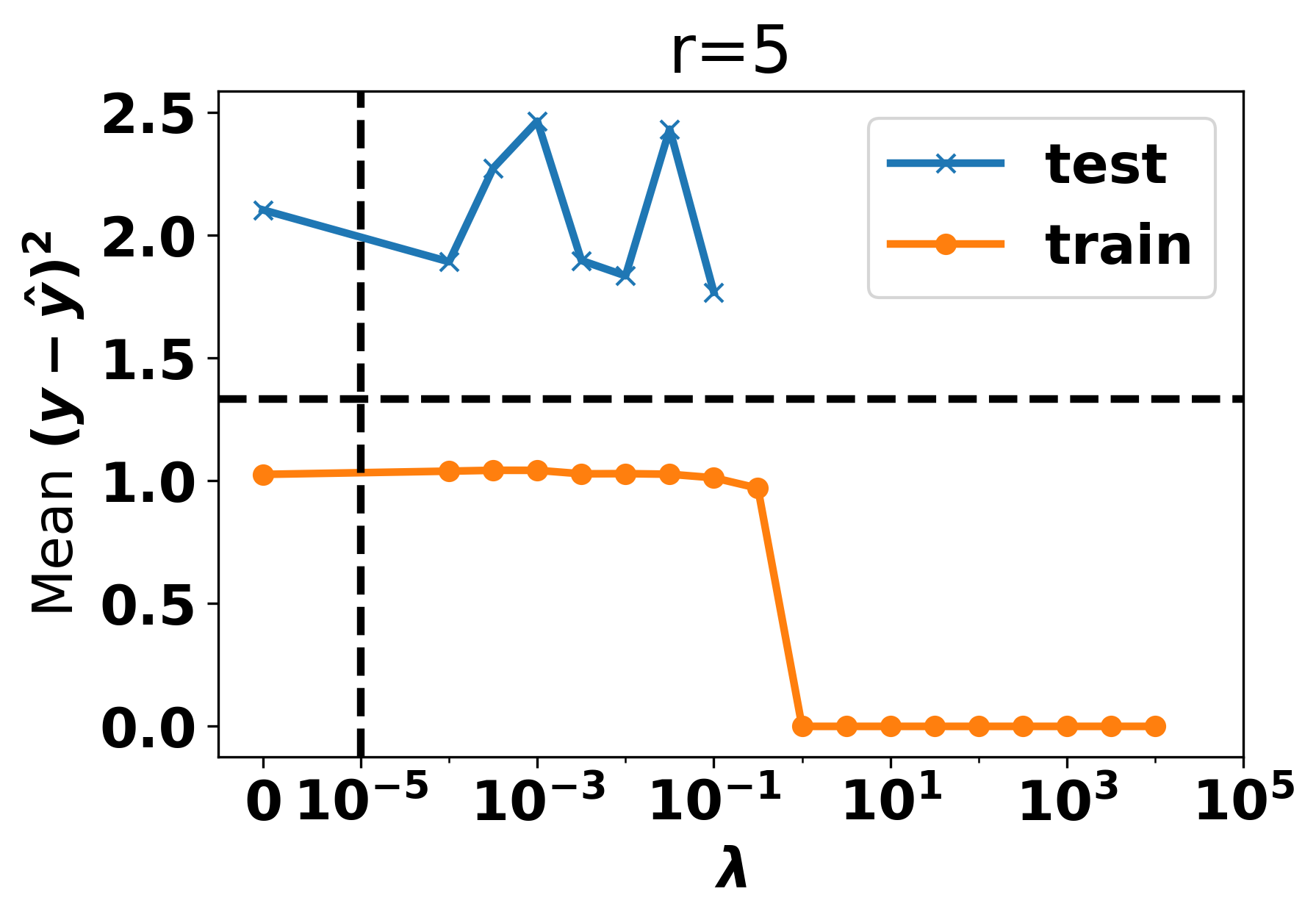}
         \caption{}
         \label{fig:s6}
     \end{subfigure} \\
          \begin{subfigure}[b]{0.3\textwidth}
         \centering
         \includegraphics[width=\textwidth]{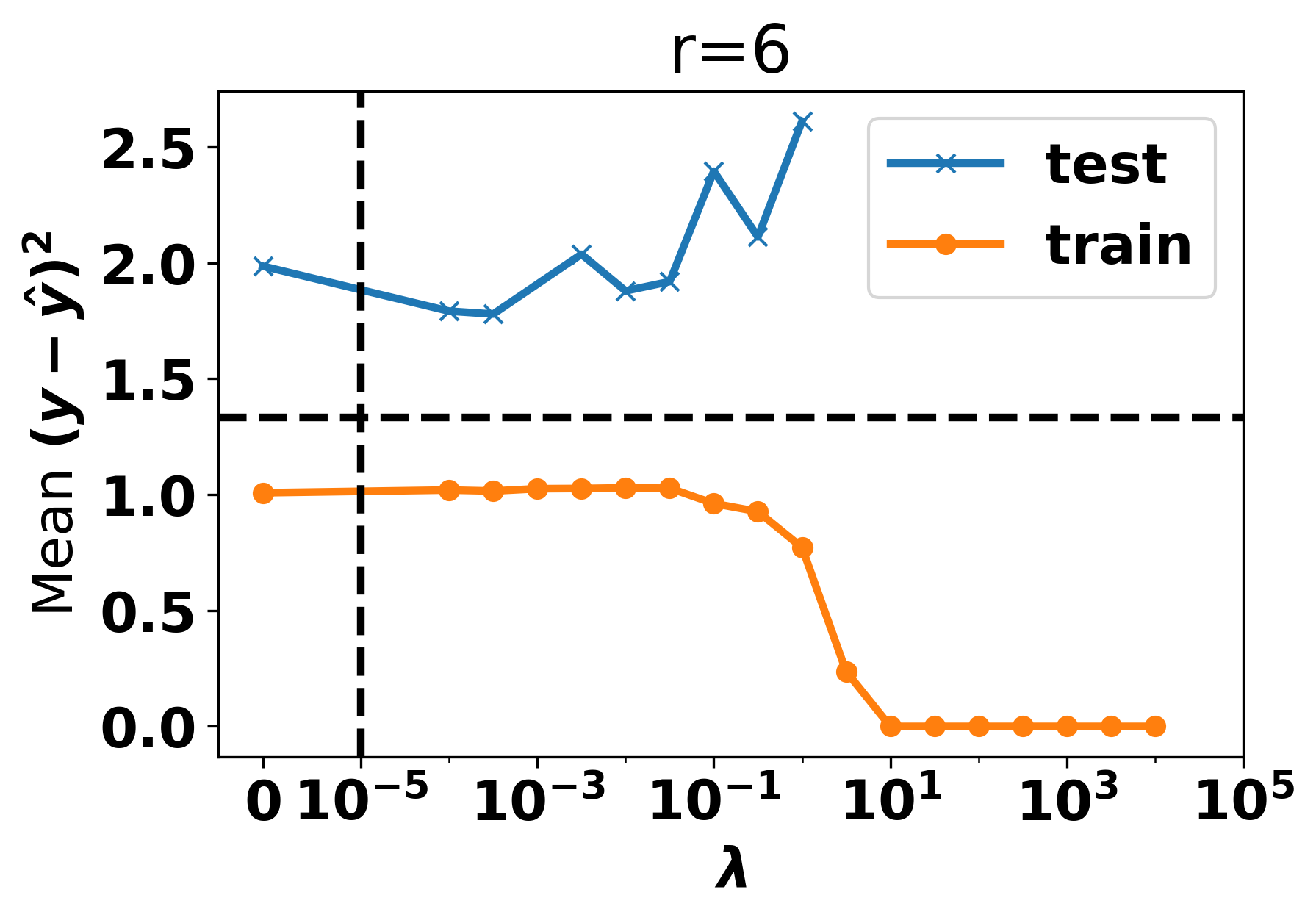}
         \caption{}
         \label{fig:s5}
     \end{subfigure}
     \hfill
     \begin{subfigure}[b]{0.3\textwidth}
         \centering
         \includegraphics[width=\textwidth]{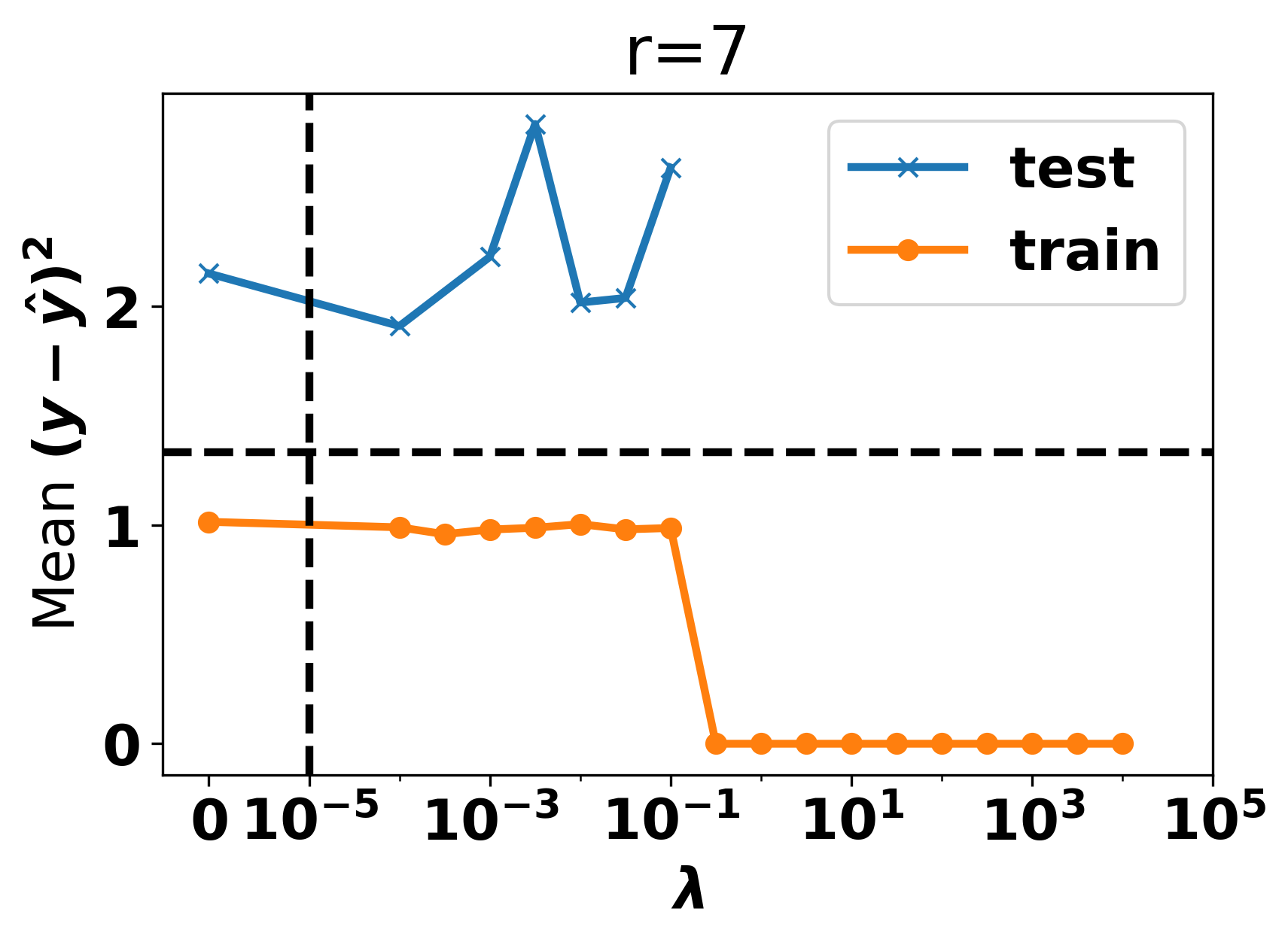}
         \caption{}
         \label{fig:s6}
     \end{subfigure} \hfill \hfill 
     \begin{subfigure}[b]{0.3\textwidth}
     \end{subfigure}      
        \caption{Regression errors with varying regression weight $\lambda$ with different numbers of topics $r$ for Gaussian noise (a)-(g) and Uniform noise (h)-(n). The $\lambda$ values used are the set $\{0\} \cup \{ 10^{i/2} | i \in \mathbb{Z} \cap [-8,8] \}.$ For each $\lambda$ and $r$, fifty trials were run and the regression errors corresponding to the best overall objective function $F^{(\lambda)}$ were recorded. Points with $\lambda>0$ for which the regression error exceeds $1.5$ times the regression error at $\lambda=0$ are not displayed. The dashed horizontal line is the estimated minimal mean regression error. The dashed vertical line is the transition point between a linear and logarithmic $x-$scale.}
        \label{fig:synthetic_errors}
\end{figure}

Taken together, we anticipate that CSSNMF will perform well provided the number of topics chosen does not exceed the true number of topics in the dataset (difficult to assess). We expect that the optimal predictions on unseen data should occur at a $\lambda$ large enough that the testing errors have decreased and plateaued. In Figure \ref{fig:synthetic_errors}, we see that for large $\lambda$, when overfitting is an issue, the testing performance is seldom better than where $\lambda=0$ (classical NMF and then regression) and in fact is often much worse.

\section{Rate My Professors Dataset}

\label{sec:rmp}

\subsection{Pre-processing}

The corpus was first processed via TFIDF \cite{joachims1996probabilistic} with the TfidfVectorizer class in \texttt{Python}'s \texttt{scikit learn} package \cite{scikit-learn}. We used arguments {\bf min\_df=0.01}, {\bf max\_df=0.15}, {\bf stop\_words='english'}, {\bf norm=`l1'}, {\bf lowercase=True}. We found the ratings were not balanced: there were $57$ on the interval $[1,2)$, $235$ on the interval $[2,3)$, $494$ on the interval $[3,4)$, and $629$ on the interval $[4,5]$. To balance the dataset, we extracted only a random subset of $57$ reviews in each interval (all ratings on $[1,2)$ were used). Overall, we obtained a corpus matrix $X$ that was $228 \times 1635$. The open right-end of the intervals ensures data are not duplicated.

\subsection{Choice of Topic Number and Regression Weight}

We did not know the true number of topics in the dataset and chose topics of $r=1,3,5,7,9$, and $11$ with $\lambda \in \{0\} \cup \{10^{2i/3} | i \in [-12,0] \cap \mathbb{Z} \}$. We present the results for $11$ topics which gave the best results. See Figure \ref{fig:rmp_errs}. We note that for large enough $\lambda$, the testing error outperforms the testing error for $\lambda=0$. The optimal point was at $\lambda = 10^{-2/3} \approx 0.215$.

We comment that it is generally difficult to know precisely where the testing error will be minimized, only that, based on observations of the synthetic data, the testing error is often better than the $\lambda=0$ case after the training error has dropped. We speculate that the level of noise in this dataset results in the testing errors not dropping below $\approx 0.75$.

\begin{figure}
\centering
\includegraphics[width=0.7\linewidth]{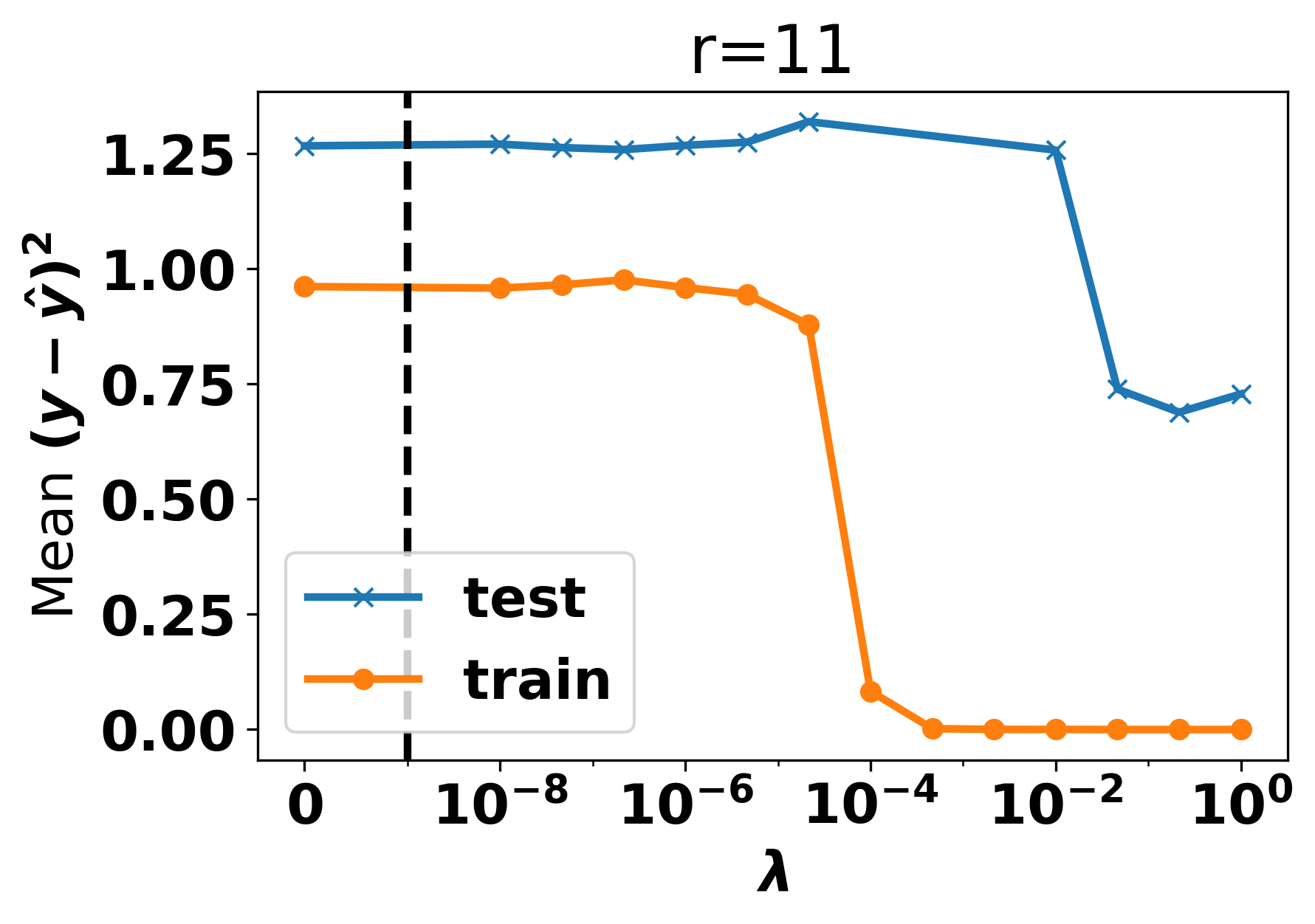}.
\caption{Errors in training and validation on Rate My Professor dataset with $r=11$ topics. Points with $\lambda>0$ for which the regression error exceeds $1.5$ times the regression error at $\lambda=0$ are not displayed. The dashed vertical line is the transition point between a linear and logarithmic $x-$scale.}
\label{fig:rmp_errs}
\end{figure}

\subsection{Prediction}

We examine the rating prediction by plotting histograms of predicted ratings where the true ratings were in $[1,2], [2,3], [3,4],$ and $[4,5]$ --- the closed intervals are used here. Figure \ref{fig:rmp_hists} depicts these histograms along with the mean predicted rating and true rating. The predictions are often within range and the mean predicted values are very close to the true means over each interval. We can also see the general predictive strength in the scatterplot of actual vs predicted ratings in Figure \ref{fig:rmp_scatter}.

These results suggest the model is able to identify topics and associated $\theta-$ weights so as to generate predictions that are consistent with true ratings. For example, in the case ratings are in $[1,2]$, we see the peak of the predictions is around $2$, not exceeding $4$, with some predictions as low as $-2$; then, in the case of ratings in $[4,5]$, the model peaks around $3.5$ and makes some predictions above $7$. There is a clear capacity for the topics to shift the predictions.

\begin{figure}
     \centering
     \begin{subfigure}[b]{0.45\textwidth}
         \centering
         \includegraphics[width=\textwidth]{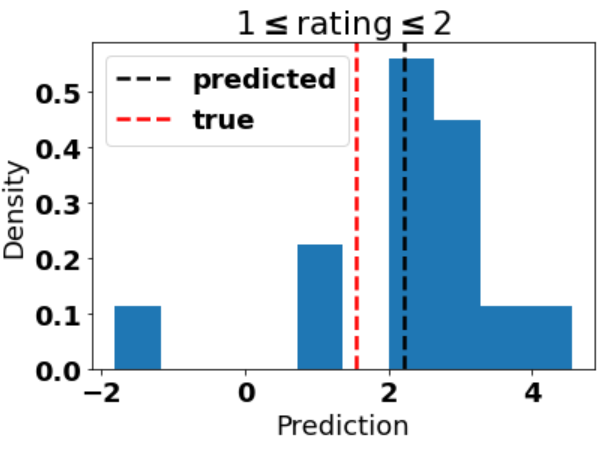}
         \caption{}
         \label{fig:s1}
     \end{subfigure}
     \hfill
     \begin{subfigure}[b]{0.45\textwidth}
         \centering
         \includegraphics[width=\textwidth]{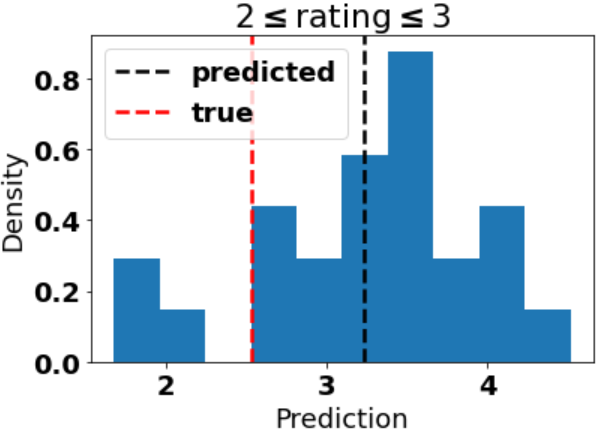}
         \caption{}
         \label{fig:s2}
     \end{subfigure} \\
     \begin{subfigure}[b]{0.45\textwidth}
         \centering
         \includegraphics[width=\textwidth]{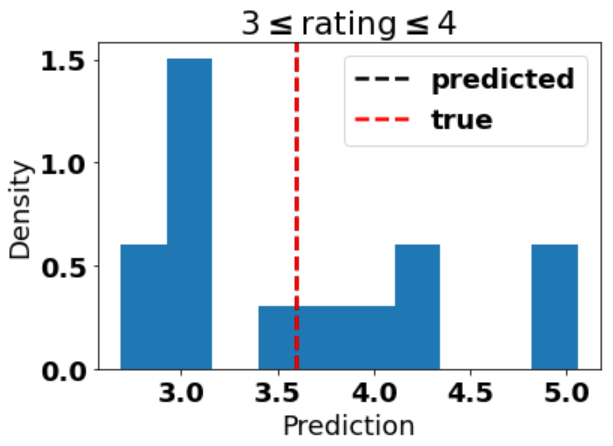}
         \caption{}
         \label{fig:s3}
     \end{subfigure} \hfill
     \begin{subfigure}[b]{0.45\textwidth}
         \centering
         \includegraphics[width=\textwidth]{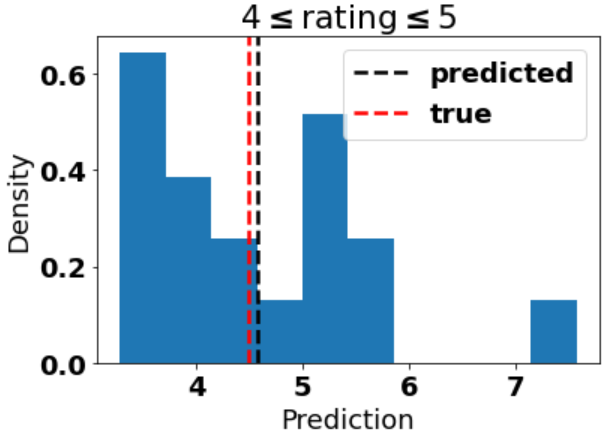}
         \caption{}
         \label{fig:s4}
     \end{subfigure}
        \caption{Histograms of the predicted rating for various ranges of true ratings. The vertical dashed lines represent the mean values. The predicted and true means are as follows: $2.206$ and $1.543$ for ratings in $[1,2]$, $3.233$ and $2.529$ for ratings in $[2,3]$, $3.594$ and $3.593$ for ratings in $[3,4]$ (the lines are indistinguishable), and $4.576$ and $4.494$ for ratings in $[4,5]$.}
        \label{fig:rmp_hists}
\end{figure}

\begin{figure}
\centering
\includegraphics[width=0.5\linewidth]{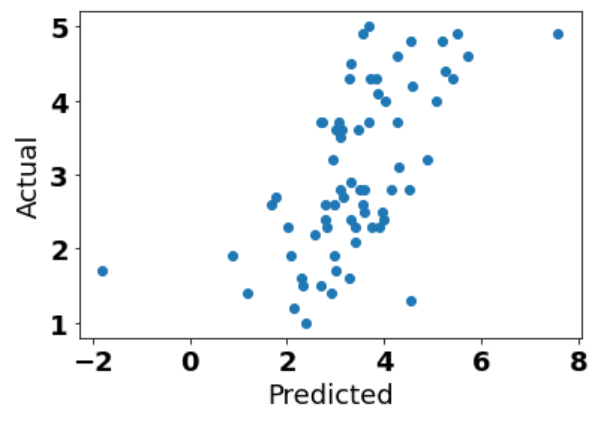}
\caption{Scatterplot of Rate My Professor ratings.}
\label{fig:rmp_scatter}
\end{figure}

\subsection{Topics Identified}

It is important that the method not only have predictive power, but also produce interpretable topics. We now look at the $11$ topics, with their associated $\theta-$weights. We find \begin{multline*}\theta = (2.39909812, 2.82948873, -2.21028471, 1.83876976, -4.77504984, \\
-3.86467795, 3.46353642,  0.03914383,  3.26619842, -5.51595505, \\
4.15317532,  3.90733652)^T.\end{multline*} Note that $\theta_1 \approx 2.4$ suggests that for a set of reviews with no topics, the average rating would be $2.4$ --- this suggests it is the presence of positive/negative topics that raise/lower the rating.

In Figures \ref{fig:rmp_positive} and \ref{fig:rmp_negative}, we plot the words in the topics associated with positive and negative ratings. The topics are interpretable. For positive topics, we find Topic 10 (extra credit) and Topic 11 (being nice/enjoyable class) and words like ``recommend" in a couple of them. A few words seem out of place like ``hate" in Topic 11, but that can be explained by some positive reviews having phrases like ``i hated chemistry in high school and after taking her class i don t {\it [sic]} hate chem as much." Among the negative topics we see Topic 4 (being horrible) and Topic 5 (being unfair).

As a whole, the topics are consistent with intuitive notions of what would be associated with higher or lower ratings. It is also interesting to look at the $\theta$-topic weights quantitatively. For example, both rants and sarcasm (suggested by Topic 2) and being harder and failing students (suggested by Topic 4) contribute negatively to the score, but being a harder teacher seems to contribute more negatively to the rating than ranting.

\begin{figure}
\centering
\includegraphics[width=1.\linewidth]{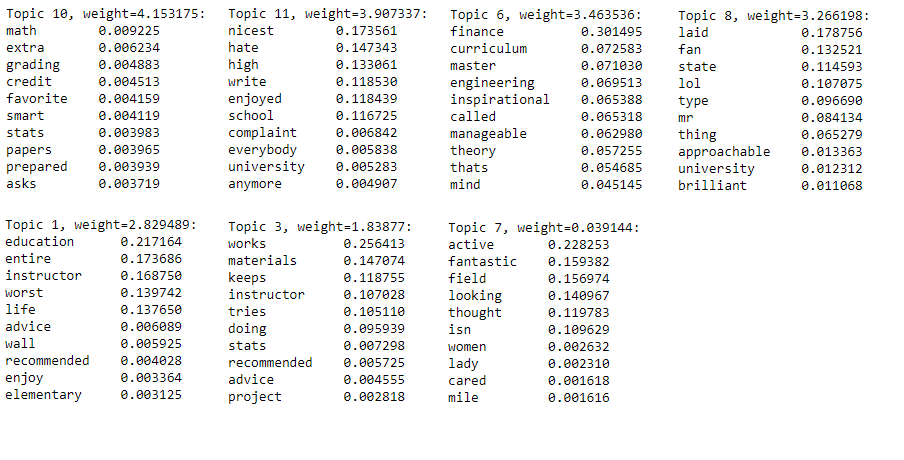}
\caption{Topics with positive $\theta-$weights. The $\theta-$ weight is given as the topic weight. The strength of each word is given numerically beside each of the top 10 words.}
\label{fig:rmp_positive}
\end{figure}

\begin{figure}
\centering
\includegraphics[width=1.\linewidth]{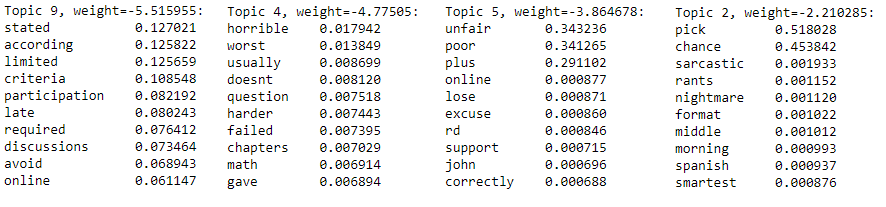}
\caption{Topics with negative $\theta-$weights. The $\theta-$ weight is given as the topic weight. The strength of each word is given numerically beside each of the top 10 words.}
\label{fig:rmp_negative}
\end{figure}

\section{Conclusion and Future Work}

\label{sec:conclusion}

We have developed CSSNMF as a means to combine NMF with regression on a continuous response variable. We accomplished this by minimizing an objective function that combines an NMF error with a weighted regression error. We have shown that the regression error is weakly decreasing with the regression error weight and that, in practical applications, the error in fact strictly decreases. The topics identified can outperform the quantitative accuracy of topics formed through NMF alone while retaining a high degree of interpretability.

While our analysis focused on the case of linear regression, incorporating nonlinearities would be of interest. We also noted the challenge in choosing the appropriate $\lambda$ given only training data. A more theoretical understanding of when testing errors drop substantially could be explored but this may be dataset-specific.


\end{document}